\documentclass[12pt]{article}

\usepackage[title]{appendix}
\usepackage{amsmath}
\usepackage{amsthm}
\usepackage{amssymb}
\usepackage{bbm}
\usepackage{verbatim}
\usepackage{graphicx}
\usepackage{subfigure}
\usepackage{afterpage}
\usepackage{etoolbox}
\usepackage{float}
\usepackage{algorithmic}
\usepackage[lined,boxed,linesnumbered,algoruled,vlined]{algorithm2e}
\usepackage{multirow}
\usepackage{rotating}
\usepackage{epstopdf}
\usepackage{multirow}
\usepackage{xcolor,colortbl}
\usepackage[font={small}]{caption}

\usepackage{geometry}
\geometry{
 a4paper,
 total={210mm,297mm},
 left=20mm,
 right=20mm,
 top=20mm,
 bottom=20mm,
 }

\numberwithin{equation}{section}
\numberwithin{figure}{section}
\numberwithin{table}{section}

\makeatletter
\renewcommand{\p@subfigure}{\thefigure}
\makeatother

\newtheorem{definition}{Definition}[section]
\newtheorem{theorem}{Theorem}[section]
\newtheorem{proposition}[theorem]{Proposition}
\newtheorem{lemma}[theorem]{Lemma}

\newcommand{\repeatable}[2]{\makeatletter \global\expandafter\def\csname repText@#1\endcsname {#2} \makeatother #2}
\newcommand{\repeatxt}[1]{\makeatletter \expandafter\csname repText@#1\endcsname \makeatother}

\newtoggle{citeInAbstract}
\toggletrue{citeInAbstract}

\newcommand{\usecrop}[2]
{
	\newlength{\cropwidth}
	\setlength{\cropwidth}{\the\textwidth}
	\addtolength{\cropwidth}{#1}
	\newlength{\cropheight}
	\setlength{\cropheight}{\the\textheight}
	\addtolength{\cropheight}{#2}
	\usepackage[width=\the\cropwidth,height=\the\cropheight,center]{crop}
}

\DeclareMathAlphabet{\mathpzc}{OT1}{pzc}{m}{it}

\newcommand{\abs}[1]{\left | #1 \right |}
\newcommand{\absinline}[1]{| #1 |}
\newcommand{\norm}[1]{\left \| #1 \right \|}
\newcommand{\norminline}[1]{\| #1 \|}

\newcommand{\Rn}[1]{{\mathbbm{R}^{#1}}}
\newcommand{\ve}{\varepsilon}

\makeatletter
\newcommand{\thickhline}{%
    \noalign {\ifnum 0=`}\fi \hrule height 1pt
    \futurelet \reserved@a \@xhline
}
\newcolumntype{"}{@{\hskip\tabcolsep\vrule width 1pt\hskip\tabcolsep}}
\makeatother

\newcommand{\set}{\mathcal}
\newcommand{\vect}{\mathbf}
\newcommand{\ivect}[2]{{\mathbf{#1}}^{({#2})}}
\newcommand{\iscal}[2]{{#1}_{{#2}}}
\newcommand{\map}{\mathbf}
\newcommand{\submat}[3]{{#1}_{({#2},{#3})}}
\newcommand{\rows}[2]{\submat{{#1}}{{#2}}{:}}
\newcommand{\cols}[2]{\submat{{#1}}{:}{{#2}}}
\newcommand{\ventry}[2]{{\vect{#1}}_{(#2)}}
\newcommand{\mentry}[3]{{#1}_{(#2,#3)}}
\newcommand{\bmu}{\boldsymbol\mu}
\newcommand{\mapi}{\boldsymbol\pi}
\newcommand{\proj}[1]{{\map{W}}_{#1}}
\newcommand{\piproj}[1]{\proj{\mapi([#1])}}
\newcommand{\perproj}[1]{{\proj{#1}^\perp}}
\newcommand{\perpiproj}[1]{{\piproj{#1}^\perp}}
\newcommand{\Wpik}[1]{\set{A}_{\mapi([#1])}}

\newcommand{\timel}[2]{{#1}^{(#2)}}

\title{Incomplete Pivoted QR-based Dimensionality Reduction}
\author{Amit Bermanis, Aviv Rotbart, Moshe Salhov and Amir Averbuch\\
School of Computer Science\\
Tel Aviv University, Tel Aviv 69978, Israel}
 \date{\today}
\begin{document}
\maketitle
\begin{abstract}
High-dimensional big data appears in many research fields such as image recognition, biology and collaborative filtering. Often, the exploration of such data by classic algorithms is encountered with difficulties due to `curse of dimensionality' phenomenon. Therefore, dimensionality reduction methods are applied to the data prior to its analysis. Many of these methods are based on principal components analysis, which is statistically driven, namely they map the data into a low-dimension subspace that preserves significant statistical properties of the high-dimensional data. As a consequence, such methods do not directly address the geometry of the data, reflected by the mutual distances between multidimensional data point. Thus, operations such as classification, anomaly detection or other machine learning tasks may be affected.

This work provides a dictionary-based framework for geometrically driven data analysis that includes dimensionality reduction, out-of-sample extension and anomaly detection. It embeds high-dimensional data in a low-dimensional subspace. This embedding preserves the original high-dimensional geometry of the data up to a user-defined distortion rate. In addition, it identifies a subset of landmark data points that constitute a dictionary for the analyzed dataset. The dictionary enables to have a natural extension of the low-dimensional embedding to out-of-sample data points, which gives rise to a distortion-based criterion for anomaly detection. The suggested method is demonstrated on synthetic and real-world datasets and achieves good results for classification, anomaly detection and out-of-sample tasks. 
\end{abstract}

\par\noindent
{\bf Key words}: linear dimensionality reduction, incomplete pivoted QR, distortion, out-of-sample extension, user-defined distortion, diffusion maps

\section{Introduction}\label{sec:intro}
Nowadays, continuous sampling of measurements from sensor systems of real-world processes has generate ever-growing datasets. Analysis of data silos is a fundamental task in many scientific and
industrial fields, whose goal is to infer significant information from a collection of observations/measurements that illuminate the underlying phenomenon that generates the observed data. The illuminated observations  will assist in tasks such as classification, clustering, forecasting and anomaly detection to name some.

High-dimensional big data analysis is of special interest since multidimensional data points usually reside in a lower dimensional subspace of the ambient space. For example, data clustering requires regions of high density that constitute clusters. In high dimensions, a huge number of data points is required to create high density regions and this number grows exponentially with dimension. 

High-dimensional data appear in two main forms: parametric and non-parametric. In the former case, every observed data point consists of many parameters where each corresponds to a single dimension. Typically, parametric data is concerned with the geometry of such data. Non-parametric data is typically originated in artificial geometry ascription to the analyzed data that  usually encapsulated in a \emph{kernel} matrix. Analysis of this geometry, through the analysis of the associated kernel, can uncover latent data features  alas such geometries may be high-dimensional. Either way, for efficient analysis of high-dimensional data, a dimensionality reduction, which  preserves the original high-dimensional geometry, is needed.

Principal Component Analysis (PCA)~\cite{hotelling:PCA} is a statistically driven linear dimensionality reduction method, originally designates for the analysis of parametric data. It acts on the Gram matrix of the data and embeds the data in a low-dimensional space, whose coordinates are the directions of the high variances of the data that are also known as principal components. PCA can be implemented by the application of the singular value decomposition (SVD)~\cite{golub} to a data matrix. However, this implementation is computationally expensive. Section~\ref{sec: svd} details the essentials of SVD.

Induced by PCA for linear analysis of parametric data, the kernel PCA  approach for non-linear analysis of non-parametric data was introduced in~\cite{kernelpca}. Essentially, if the kernel matrix is semi-positive definite, then it can be treated as a Gram matrix of the data in the high-dimensional space, which is also referred to as \emph{feature space}. Then, PCA is applied to the kernel matrix in order to embed the data in a low-dimensional space. Some examples for kernel methods, among many others, are local linear embedding (LLE)~\cite{roweis2000nonlinear}, Isomap~\cite{isomap} and Diffusion Maps (DM)~\cite{coifman:DM2006}. LLE embeds the data in  a low-dimensional space whose geometry represents local linear geometries of the high-dimensional data. The low-dimensional geometry, which is produced by Isomap, preserves the geodesic distances in the original data. DM provides a low-dimensional representation of the diffusion geometry of the data as defined in~\cite{berard,coifman:DM2006}.

In this work, an incomplete pivoted QR-based deterministic method for dimensionality reduction is presented. The method is designed to preserve a high-dimensional Euclidean geometry of parametric data up to a user-specified distortion rate, according to the following definition:
\begin{definition}[$\mu$-distortion] \label{def: distortion}
Let $(\set H, \map m_\set{H})$ and $(\set L, \map m_\set{L})$ be metric spaces, $\set A\subset \set{H}$ and $\mu\geq 0$. A map $\map F:\set A\to\set L$ is called a $\mu$-distortion of $\set A$ if $\sup_{x,y\in\set A} \abs{\map m_\set{H}(x,y)-\map m_\set{L}(\map F(x),\map F(y))}\leq\mu$. The space $\set L$ is referred to as the $\mu$-embedding space of $\set A$.
\end{definition}
The proposed method is dictionary-based, where the dictionary is chosen from the analyzed dataset $\set A$. The method identifies a Euclidean embedding space, which is spanned by the dictionary members, on which the orthogonal projection of the data provides a user-defined distortion of the original high-dimensional dataset. In that sense, our method is geometrically driven as opposed to PCA. Clearly, there is an interplay between the distortion rate and the dimension of the resulted embedding subspace: the smaller $\mu$ is the higher is  the embedding's dimension and vice-versa. Our method preserves global patterns of the data and trades local geometry for low-dimensional representation since dense regions in the original data (such as clusters) are more sensitive to distortions than sparse regions such as gaps between clusters.

Additionally to dimensionality reduction, we present two strongly related schemes for out-of-sample extension and anomaly detection, which are naturally stem from the proposed method for dimensionality reduction. Thus, the dimensionality reduction phase, followed by an out-of-sample extension and by an anomaly detection, constitutes a complete framework for semi-supervised learning, where the original (in-sample) dataset $\set A$ functions as a training set. In this context, the learning phase is reflected in the extraction of a $\mu$-embedding subspace of $\set A$ as defined in Definition~\ref{def: distortion}. Out-of-sample data points, whose projection on the representative subspace is of low distortion, are classified as normal, while the rest are classified as abnormal. Therefore, the original dataset $\set A$ is considered as normal by definition.

To conclude, the contribution of this work is threefold: first, the suggested method identifies landmark data points (dictionary) that represent the data, as opposed to PCA that lacks this property, and therefore in some sense it is less informative. In matrix decomposition terminology, this is known as the Columns Subset Selection (CSS) problem. Secondly, the presented method requires very low storage budget relatively to PCA consumption. In the worst case, its computational complexity is identical to that of PCA computation. Lastly, the proposed out-of-sample extension and anomaly detection constitute natural extensions to the dimensionality reduction phase.

The rest of this paper is organized as follows: in the rest of this section, we review related works (Section~\ref{sec:related_works}), describe the used notation and our general approach (Sections~\ref{sec:notation} and~\ref{sec:genappch}, respectively), and discuss the essentials of PCA-based dimensionality reduction (Section~\ref{sec: svd}). Section~\ref{sec:qrdr} establishes the theory and the technical tools on which our method is based. It presents the robustness of our method to noise and its relation to matrix approximation, as well. Although the proposed method is designated for parametric data analysis, section~\ref{sec:qrdm} describes its utilization for Diffusion Maps (DM), which is a non-parametric analysis method. Section~\ref{sec:experimental} presents the experimental results. Finally, Section~\ref{sec:conclusions} concludes the paper and discusses future work.

\subsection{Related works}\label{sec:related_works}
Johnson-Lindenstrauss (JL) Lemma~\cite{johnson1984extensions} constitutes a basis for many random projection based dimensionality reduction methods~\cite{linial1995geometry,indyk1998approximate,schulman2000clustering,achlioptas2003database,ailon2006approximate,clarkson2008tighter,baraniuk2008simple,santosh2005random} to name some. In contrast to the absolute bound in Definition~\ref{def: distortion}, JL Lemma guarantees a relative distortion bound of the form $(1-\varepsilon)\cdot \map m^2_\set{H}(x,y)\leq \map m^2_\set{L}(\map F(x),\map F(y))\leq (1+\varepsilon)\cdot \map m^2_\set{H}(x,y) $ with high probability. From data analysis perspective, the absolute bound may be more useful when the data is comprised of dense clusters separated by sparse regions. In this scenario, the absolute bound guarantees embedding such that intra-cluster distances may be distorted but inter-cluster distances are preserved so that the global high-dimensional geometry is preserved in the $\mu$-embedded space. The relative bound, on the other hand, may produce a low-dimensional space in which clusters are becoming too close to each other.

Another significant branch of dimensionality reduction methods, which is strongly related to the presented work, deals with the CSS problem~\cite{frieze2004fast, Drineas06subspacesampling, Boutsidis, Boutsidis:2009,boutsidis2008unsupervised,mahoney2011randomized} to name a few. Interpolative decomposition (ID) of a matrix was introduced first for operator compression~\cite{cheng2005compression}. It is designed  to approximate spectrally linear integral operators. A randomized version of ID is presented in~\cite{martinsson2011randomized}. The class of randomized matrix decomposition algorithms is out of the scope of this paper. CUR decomposition~\cite{boutsidis2014optimal,mahoney2009cur} of a given matrix $A$ generalizes the ID in the sense  that both subsets of columns and rows of the original matrix are selected to form the matrices $C$ and $R$, respectively, such that $A\approx CUR$ for a low rank matrix $U$. In data analysis terms, this decomposition enables us to get sampling of significant data points (rows), as well as significant features (columns).  Similar to ID and unlike this work, CUR decompositions are designated to spectrally approximate the original matrix $A$. Randomized versions of CUR also exist - see~\cite{wang2013improving,drineas2008relative} and the references therein.

\subsection{Notation}\label{sec:notation}
In the rest of the paper, the following notation are used: for $k\in\mathbbm{N}$, $[k]=\{1,\ldots,k\}$. $I_n$ is the $n\times n$ unit matrix. The $i$-th coordinate of a vector $\vect{v}\in\Rn{n}$ is denoted by $\ventry{v}{i}$. The $(i,j)$-th entry of a matrix $A$ is $\mentry{A}{i}{j}$ and its $i$-th row and $j$-th column are $\rows{A}{i}$ and $\cols{A}{j}$, respectively. If $A$ is of size $m\times n$ and $\set I=\{i_1,\ldots,i_k\}\subset [m]$ and $\set J=\{j_1,\ldots,j_\ell\}\subset[n]$ are two ordered sets, then $\rows{A}{\set I}$ is the $k\times n$ matrix $B$ for which $\rows{B}{r} = \rows{A}{i_r}$, $r\in[k]$, $\cols{A}{\set J}$ is the $m\times \ell$ matrix $C$ for which $\cols{C}{r} = \cols{A}{j_r}$, $r\in[\ell]$ and $\submat{A}{\set I}{\set J}$ is the $k\times\ell$ matrix $D$, whose $(p,q)$-th entry is $\mentry{D}{p}{q} = \mentry{A}{i_p}{j_q},~p\in[k],q\in[\ell]$. The transposed matrix of $A$ is denoted by $A^\ast$ and $A^\dagger$ is the Moore-Penrose pseudo-inverse of $A$. $\mapi:[n]\to[n]$ denotes permutation, and $\Pi$ is the associated $n\times n$ permutation matrix such that $\mentry{\Pi}{i}{j}=1$ if $\mapi(j)=i$, otherwise $\mentry{\Pi}{i}{j}=0$. For a subspace $\set S\subset\Rn{m}$, $\set S^\perp$ is the complementary perpendicular subspace of $\set S$ in $\Rn{m}$, and $\proj{\set S},\perproj{\set S}:\Rn{m}\to\Rn{m}$ are the corresponding orthogonal projections on these subspaces, respectively. Finally, $\norminline{\vect{v}} $ is the standard Euclidean norm of $\vect{v}$, where $\norminline{\vect v}_{1} = \sum_{i=1}^n\absinline{\ventry{v}{i}}$ is its $\ell_1$ norm.

The explored dataset is $\set A = \{\ivect{a}{1},\ldots,\ivect{a}{n}\}\subset\Rn{m}$ and the associated data matrix is the $m\times n$ matrix $A$, whose $j$-th column is $\cols{A}{j}=\ivect{a}{j}$, $j\in[n]$.

\subsection{General approach}\label{sec:genappch}

Our approach to achieve a low-rate distortion embedding consists of two main steps:
\begin{enumerate}
\item\label{cond1} Given a nonnegative distortion parameter $\mu$. An $s$-dimensional $2\mu$-embedded subspace $\set S \subset \Rn{m}$ is identified, for which the orthogonal  projection of any $\vect a\in\set A$ results in an energy loss of at most $\mu$, i.e.
\begin{equation}
\label{eq:cond1}
\norm{\perproj{\set S}(\ivect{a}{i})} \leq\mu,\quad i\in[n].
\end{equation}
Here, the nonnegative distortion parameter $\mu$ is a user defined input. Clearly, $s$ is a non-increasing function of $\mu$.
\item\label{cond2} The subspace $\set S$ is orthogonally aligned with $\Rn{s}$ to achieve an $s$-dimensional representation of $\set A$, i.e. $\map O_{\set S}:\Rn{m}\to\Rn{s}$ is an orthogonal transformation that satisfies
\begin{equation}
\label{eq:cond2}
\norminline{\map O_{\set S}\vect v} =\norminline{\vect v} ,\quad\vect v\in\set S.
\end{equation}
Obviously, such an alignment (which is not unique) does not affect the geometry of the projected set $\set A$ on $\set S$.
\end{enumerate}
Application of the above two-stage scheme to $\set A$ results in a $2\mu$-distortion as Lemma~\ref{prop:fund} shows.
\begin{lemma}   \label{prop:fund}
Let $\set S\subset\Rn{m}$ be an $s$-dimensional subspace of $\Rn{m}$ that satisfies Step~1 and let $\map O_{\set S}:\Rn{m}\to\Rn{s}$ be an orthogonal transformation that satisfies Step~2. Then, the $s$-dimensional map $\map F_s: \Rn{m}\to\Rn{s}$ \begin{equation}
\label{eq:Fdecomp}
\map F_s\triangleq \map O_{\set S}\circ\proj{\set S}
\end{equation}
is a $2\mu$-distortion of $\set A$.
\end{lemma}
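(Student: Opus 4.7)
The plan is to bound, for an arbitrary pair $\ivect{a}{i},\ivect{a}{j}\in\set A$, the quantity $\big|\|\ivect{a}{i}-\ivect{a}{j}\|-\|\map F_s(\ivect{a}{i})-\map F_s(\ivect{a}{j})\|\big|$ by $2\mu$, which is exactly what Definition~\ref{def: distortion} demands for $\set H = \Rn{m}$, $\set L = \Rn{s}$ with the Euclidean metrics. The argument is essentially triangle inequality plus Pythagoras applied to the orthogonal decomposition along $\set S$ and $\set S^\perp$.

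First I would peel off the orthogonal-alignment step: since $\map F_s$ is linear, $\map F_s(\ivect{a}{i})-\map F_s(\ivect{a}{j}) = \map O_{\set S}\big(\proj{\set S}(\ivect{a}{i}-\ivect{a}{j})\big)$. Because $\proj{\set S}(\ivect{a}{i}-\ivect{a}{j})\in\set S$ and Step~2, encoded in~\eqref{eq:cond2}, states that $\map O_{\set S}$ preserves Euclidean norms of vectors in $\set S$, we obtain $\|\map F_s(\ivect{a}{i})-\map F_s(\ivect{a}{j})\|=\|\proj{\set S}(\ivect{a}{i}-\ivect{a}{j})\|$. Thus the alignment contributes no distortion, and the whole estimate reduces to bounding the deformation induced by the orthogonal projection $\proj{\set S}$.

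Next I would write $\ivect{a}{i}-\ivect{a}{j}=u+v$ with $u\triangleq\proj{\set S}(\ivect{a}{i}-\ivect{a}{j})\in\set S$ and $v\triangleq\perproj{\set S}(\ivect{a}{i}-\ivect{a}{j})\in\set S^\perp$. Linearity of $\perproj{\set S}$ and the Step~1 hypothesis~\eqref{eq:cond1} give
\begin{equation*}
\|v\|=\|\perproj{\set S}(\ivect{a}{i})-\perproj{\set S}(\ivect{a}{j})\|\leq\|\perproj{\set S}(\ivect{a}{i})\|+\|\perproj{\set S}(\ivect{a}{j})\|\leq 2\mu,
\end{equation*}
which is precisely where the factor $2$ in ``$2\mu$-distortion'' originates. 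Then orthogonality of $u$ and $v$ yields $\|\ivect{a}{i}-\ivect{a}{j}\|^2=\|u\|^2+\|v\|^2$, and the elementary two-sided inequality $\|u\|\leq\sqrt{\|u\|^2+\|v\|^2}\leq\|u\|+\|v\|$ (the right inequality being $\sqrt{a^2+b^2}\leq a+b$ for $a,b\geq 0$) gives
\begin{equation*}
0\leq\|\ivect{a}{i}-\ivect{a}{j}\|-\|\proj{\set S}(\ivect{a}{i}-\ivect{a}{j})\|\leq\|v\|\leq 2\mu,
\end{equation*}
which, combined with the first paragraph, yields the required bound.

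I do not anticipate any real obstacle: the proof is a three-line triangle-inequality/Pythagoras computation. The only points requiring slight care are (i) remembering that Definition~\ref{def: distortion} measures the absolute difference of metric values rather than the norm of the difference of images, and (ii) noticing that the per-point bound $\|\perproj{\set S}(\ivect{a}{i})\|\leq\mu$ has to be applied to \emph{two} points, which is exactly what doubles $\mu$ into $2\mu$ and explains why Step~1 is labelled a $2\mu$-embedding even though the per-point residual is only $\mu$.
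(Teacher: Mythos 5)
Your proposal is correct and follows essentially the same route as the paper's proof: reduce to the projection $\proj{\set S}$ via the isometry of $\map O_{\set S}$ on $\set S$, bound the norm drop by $\norminline{\perproj{\set S}(\ivect{a}{i}-\ivect{a}{j})}$, and split that into the two per-point residuals from Eq.~\ref{eq:cond1}. The only cosmetic difference is that you justify the middle inequality by Pythagoras and $\sqrt{a^2+b^2}\leq a+b$, whereas the paper uses the reverse triangle inequality $\norminline{\vect v}-\norminline{\proj{\set S}(\vect v)}\leq\norminline{\vect v-\proj{\set S}(\vect v)}$; both are one-line observations of the same fact.
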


\begin{proof}
From Eqs.~\ref{eq:cond2} and~\ref{eq:Fdecomp} we get $\norminline{\map F_s(\vect v)} = \norminline{\map O_{\set S}\circ\proj{\set S}(\vect v)}= \norminline{\proj{\set S}(\vect v)}$. Since $\proj{\set S}$ is orthogonal projection, then $\norminline{\proj{\set S}(\vect v)}\leq \norminline{\vect v}$. Thus, $0\leq\norminline{\vect v}-\norminline{\map F_s(\vect v)} = \norminline{\vect v}-\norminline{\proj{\set S}(\vect v)}\leq \norminline{\vect v-\proj{\set S}(\vect v)}= \norminline{\perproj{\set S}(\vect v)},~\vect v\in \Rn{m}$. Substituting $\vect v = \ivect{a}{i}-\ivect{a}{j},~i,j\in[n]$, yields $0\leq\norminline{\ivect{a}{i}-\ivect{a}{j}}-\norminline{\map F_s(\ivect{a}{i})-\map F_s(\ivect{a}{j})}\leq\norminline{\ivect{a}{i}-\ivect{a}{j}-\map F_s(\ivect{a}{i})+\map F_s(\ivect{a}{j})}  \leq \norminline{\perproj{\set S}(\ivect{a}{i})}+\norminline{\perproj{\set S}(\ivect{a}{j})}\leq 2\mu$. The last inequality is due to Eq.~\ref{eq:cond1}.
\end{proof}

We stress the fact that our goal is to approximate the \emph{geometry} of the dataset $\set A$ rather than its members, therefore, we use $\map F_s$ rather than $\proj{\set S}$.

\subsection{PCA-based dimensionality reduction}\label{sec: svd}

A common practice to achieve dimensionality reduction is based on PCA~\cite{hotelling:PCA} of the (centered) $m\times n$ data matrix $A$. This method uses a singular value decomposition (SVD)~\cite{golub} of the data matrix to detect a set of maximum variance orthogonal directions (singular vectors) in $\Rn{m}$. Projection of the data onto the $\rho$ most significant directions yields the best $\rho$-dimensional embedding of the data in the mean square error sense. The  computational and storage complexities of SVD are $O(\min\{m,n\}\cdot mn)$ and $O(\min\{m,n\}^2)$, respectively.

Numerical methods for the computation of SVD approximation have attracted a growing interest. Out of many methods, we mention here some central ones. In recent years, randomized algorithms for SVD approximation of large matrices have become popular. We refer the reader to~\cite{halko2011finding} and references therein for a review of such methods. An efficient incremental algorithms for computing a thin SVD that considers $\rho$ components is suggested in~\cite{Brand2006}. The computational complexity of this method is $O(\rho nm)$ for $\rho \leq \min\{m,n\}$. The incremental nature of the algorithm makes it suitable for analysis of dynamic data, where rows/columns are dynamically added and subtracted from the data matrix. Another interesting approach, which  reduces the SVD computational cost, is to use  matrix sparsification by zeroing out small values in the data matrix. This widely used approach utilizes a sparse eigensolver such as Lanczos to compute the relevant $\rho$ eigen-components~\cite{Lanczos}. If $\rho$ is small in comparison to the matrix size, then the computational complexity of Lanczos is $O \left(\max\{m,n\}^2 \cdot\rho \right)$~\cite{golub}. The storage requirements for Lanczos is $O(m)$. Additionally, Lanczos method can be modified to terminate when the smallest estimated  eigenvalue is well approximated and its value is lower than a given threshold. More sparsification approaches are given in~\cite{vonLuxburg}. Finally, the Nystr\"om extension method~\cite{baker:nystrum} provides an additional technique to reduce the SVD computation cost by using a low rank sketch of the data matrix.

Mathematically, suppose that the rank of $A$ is $\rho$. Let $A=USV^\ast$ be the (thin) SVD of $A$, where $U$ and $V$ are $m\times \rho$ and $n\times \rho$ matrices, respectively, whose columns are orthonormal, and $S$ is a diagonal $\rho\times \rho$ matrix, whose diagonal elements are ordered decreasingly $\iscal{s}{1}\geq\ldots\geq \iscal{s}{\rho}\geq 0$. The columns of $U$ and $V$ are referred to as the left and right singular vectors of $A$ respectively, and the diagonal elements of $S$ as its singular values. Then, for any $k \in [\rho]$ we have $\norm{A-A_k} \leq\norm{A-B} $ for any orthogonally invariant matrix norm and any $m\times n$ matrix $B$ of rank $k$ or less, where $A_k$ is the $k$-SVD of $A$, i.e. $A_k = \cols{U}{[k]}\mentry{S}{[k]}{[k]}(\cols{V}{[k]})^\ast$. Let $\set U$ be the subspace spanned by $\cols{U}{[k]}$'s columns, then the $k$-dimensional embedding $\map F_k:\Rn{m}\to\Rn{k}$, which is defined by $\map F_k(\vect v) \triangleq (\cols{U}{[k]})^\ast\vect v$, is a composition of the orthogonal map $\map O_{\set U}:\Rn{m}\to\Rn{k}$, $\map O_{\set U}(\vect v)\triangleq (\cols{U}{[k]})^\ast\vect v$  and the orthogonal projection $\proj{\set U}$, i.e. $\map F_k = \map O_{\set U}\circ\proj{\set U} $ (see Eq.~\ref{eq:Fdecomp}). Lemma~\ref{lem: spect_dist} quantifies the distortion rate of $\map F_k$, applied to $\set A$, with respect to the spectrum of $A$, which is encapsulated in $S$.

\begin{lemma}
\label{lem: spect_dist}
The $k$-dimensional embedding $\map F_k$ is a $2\iscal{s}{k+1}$-distortion of $\set A$.
\end{lemma}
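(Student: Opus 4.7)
The plan is to realize $\map F_k$ as an instance of the generic two-step construction of Lemma~\ref{prop:fund}, with $\set S$ taken to be $\set U = \mathrm{span}(\cols{U}{[k]})$, $\map O_{\set S} = \map O_{\set U}$, and the parameter $\mu$ set to $\iscal{s}{k+1}$. Since the excerpt already establishes that $\map F_k = \map O_{\set U}\circ\proj{\set U}$, all that remains is to verify the two hypotheses of Lemma~\ref{prop:fund} for this particular choice, after which the $2\iscal{s}{k+1}$-distortion claim is immediate.

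First I would dispatch Step~2, i.e.~show that $\map O_{\set U}$ is an isometry on $\set U$. Any $\vect v\in\set U$ has the form $\vect v = \cols{U}{[k]}\vect c$ for a unique $\vect c\in\Rn{k}$, and since the columns of $\cols{U}{[k]}$ are orthonormal, $\map O_{\set U}(\vect v) = (\cols{U}{[k]})^\ast\cols{U}{[k]}\vect c = \vect c$, whose Euclidean norm equals $\norminline{\vect v}$. This is a short, routine verification with no real content beyond orthonormality of the left singular vectors.

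The substantive step is to verify Step~1: $\norminline{\perproj{\set U}(\ivect{a}{i})}\leq \iscal{s}{k+1}$ for every $i\in[n]$. I would compute $\proj{\set U} A = \cols{U}{[k]}(\cols{U}{[k]})^\ast A$ using the SVD $A = USV^\ast$; since $(\cols{U}{[k]})^\ast U$ selects the first $k$ rows, this product collapses to $A_k$, the best rank-$k$ approximation of $A$. Consequently, the $i$-th column of $\perproj{\set U} A$ equals $\cols{(A-A_k)}{i} = (A-A_k)\vect e_i$. Applying the submultiplicativity of the Euclidean/spectral norm and invoking the Eckart--Young--Mirsky identity $\norm{A-A_k}_2 = \iscal{s}{k+1}$ yields $\norminline{\perproj{\set U}(\ivect{a}{i})}\leq \iscal{s}{k+1}\cdot\norminline{\vect e_i} = \iscal{s}{k+1}$, as required.

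With both hypotheses of Lemma~\ref{prop:fund} satisfied for the parameter $\mu = \iscal{s}{k+1}$, the lemma delivers that $\map F_k$ is a $2\iscal{s}{k+1}$-distortion of $\set A$, completing the argument. I do not anticipate any genuine obstacle here: the proof is essentially a bookkeeping exercise that packages the Eckart--Young bound on the column-wise residuals of $A-A_k$ into the format demanded by the framework already developed in Section~\ref{sec:genappch}.
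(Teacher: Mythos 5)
Your proposal is correct and is essentially the paper's own argument: both rest on the column-wise residual bound $\norminline{\perproj{\set U}(\cols{A}{i})}\leq \iscal{s}{k+1}$ obtained from the SVD, followed by the triangle-inequality step that your appeal to Lemma~\ref{prop:fund} packages and the paper simply redoes inline. If anything your version is slightly more explicit, since the paper asserts the residual bound only as ``due to $A$'s SVD'' whereas you derive it from $\proj{\set U}A=A_k$ together with $\norminline{A-A_k}_{2}=\iscal{s}{k+1}$.
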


\begin{proof}
From the triangular inequality we have $\norminline{\cols{A}{i}-\cols{A}{j}} \leq \norminline{\perproj{\set U}(\cols{A}{i})} +\norminline{\perproj{\set U}(\cols{A}{j})} +\norminline{\proj{\set U}(\cols{A}{i})-\proj{\set U}(\cols{A}{j})} $. Since $\perproj{\set U}=I-\proj{\set U}$, and due to $A$'s SVD, we have \textbf{  $\norminline{\proj{\set U}(\cols{A}{i})}<\iscal{s}{k+1}$ and $\norminline{\proj{\set U}(\cols{A}{j})}<\iscal{s}{k+1}$.} Moreover, since $\map F_k$ is an orthogonal map, we have $\norminline{\map F_k(\cols{A}{i}-\cols{A}{j})} \leq \norminline{\cols{A}{i}-\cols{A}{j}} $. Thus, $\absinline{\norminline{\cols{A}{i}-\cols{A}{j}}-\norminline{\map F_k(\cols{A}{i}-\cols{A}{j})}}\leq 2\iscal{s}{k+1}$.
\end{proof}
A particular case of Lemma~\ref{lem: spect_dist} is when $k=\rho$. Then, $\map F_k$ embeds $\set A$ accurately in $\Rn{\rho}$. The computational and storage complexities of the thin SVD are $O(\rho mn)$ and $O(\max\{m,n\}^2)$, respectively. In addition, the principal subspace $\set U$, on which the data is projected, is a mixture of the entire columns set of $A$ which, in terms of data analysis, may be less informative than a dictionary-based subspace.

\section{Incomplete Pivoted QR-based Data Analysis}\label{sec:qrdr}
In this section, a QR-based method for data sampling and dimensionality reduction  is suggested, as well as consequent out-of-sample and anomaly detection schemes. The method is geometrically driven in the sense that a low-dimensional approximation is constructed to constitute a user-defined distortion of the high-dimensional dataset $\set A$. This is accomplished by using an incomplete pivoted QR decomposition of the data matrix $A$ that is described in section~\ref{sec:PIQR}.

The suggested method incrementally and simultaneously constructs a low-dimensional subspace and projects the data on it. The basis elements for the constructed subspace are chosen from $\set A$. Thus, this method also identifies a subset of representative landmark data points according to the user-defined distortion parameter. The landmarks subset $\set{D}\subset\set{A}$ is referred to as \emph{dictionary}. The dictionary enables both an efficient out-of-sample extension and anomaly detection for any data point $\vect x\in\Rn{m}\backslash\set{A}$. In this context, $\set A$ is referred as a \emph{training set}, and each member is considered as normal. The out-of-sample extension is based only on the geometrical relations between $\vect x$ and the dictionary members as described in Section~\ref{sec: oose}.

Both the computational and the storage costs of the proposed method depend on the dimension of the embedded space. In the worst case, where the dictionary consists of the whole data, these complexities are identical to the corresponding complexities of the thin SVD of the associated data matrix $A$. Moreover, since the proposed algorithm neither uses the powers of $AA^\ast$ nor $A^\ast A$, as opposed to classical algorithms for SVD computations~\cite{golub}, there is no necessity to store $A$ in the RAM.

There are several methods for practical computation of QR decomposition. Householder~\cite{HouseholderQR1958}, Givenes rotations~\cite{GivensQR1958} and  Gram-Schmidt or modified Gram-Schmidt~\cite{GS1966} are some typical methods. In~\cite{Jennings87}, an incomplete Gram-Schmidt and incomplete Givens transform are utilized to find an incomplete QR decomposition. Another relevant approach is the Rank Revealing QR (RRQR) method~\cite{Chan1987}. The RRQR can be used for matrix approximation by proper manipulation of the QR output~\cite{Chan1992}. The proposed pivoted incomplete QR algorithm is one of many methods to compute a partial orthogonal decomposition~\cite{Bai2001,Papadopoulos02aclass}. Yet, the theoretical basis for our method is valid for any other version of pivoted incomplete QR algorithm.

Robustness of the proposed method to noise is presented in Section~\ref{sec:noise} and the resulted matrix approximation is proved in Section~\ref{sec:mat_appx}.

\subsection{Mathematical preliminaries}\label{sec:QR}

QR factorization with columns pivoting~\cite{golub} of an $m\times n$ matrix $A$ of rank $\rho$ is
\begin{equation}
\label{eq:pqr}
A\Pi=QR,
\end{equation}
where $\Pi$ is an $n\times n$ permutation matrix, $Q$ is an $m\times \rho$ matrix whose columns constitute an orthonormal basis for the columns space of $A$ and $R$ is a $\rho\times n$ upper diagonal matrix. This decomposition represents the Gram-Schmidt process applied to $A$'s columns one-by-one due the order determined by $\Pi$. Therefore, for any $k\in[\rho]$ we have
\begin{equation}
\label{eq:W_i}
\Wpik{k} = \set Q_{[k]},
\end{equation}
and
\begin{equation}
\label{eq:Q}
\cols{Q}{k} = \perpiproj{k-1}(\cols{A}{\mapi(k)})/\norm{\perpiproj{k-1}(\cols{A}{\mapi(k)})},
\end{equation}
where $\set A_\set I$ and $\set Q_{\set I}$ are the subspace spanned by the columns of $\cols{A}{\set I}$ and $\cols{Q}{\set I}$, respectively, and $\piproj{k}:\Rn{m}\to\Wpik{k}$ is the orthogonal projection on $\Wpik{k}$. Equation~\ref{eq:W_i} suggests that for any $k\in[\rho]$
\begin{equation}
\label{eq:proj_rep}
\piproj{k}(\vect v) = \cols{Q}{[k]}(\cols{Q}{[k]})^\ast\vect v,~\perpiproj{k}(\vect v) = \vect v-\piproj{k}(\vect v),\quad\vect v\in\Rn{m}.
\end{equation}

The presented dimensionality reduction method is based on the incomplete pivoted QR decomposition of the data matrix $A$. The criteria for pivoting and incompleteness are based on Lemma~\ref{lem:rii}:
\begin{lemma}
\label{lem:rii}
Consider Eq.~\ref{eq:pqr}. Then, for any $k\in[\rho]$,
\begin{equation}\label{eq:r_ii}
\mentry{R}{k}{k} = \norm{\perpiproj{k-1}(\cols{A}{\mapi(k)})}.
\end{equation}
\end{lemma}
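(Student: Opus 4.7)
My plan is to read the identity directly off the matrix factorization $A\Pi = QR$ by looking at a single column; no iterative or inductive argument is needed once Eq.~\ref{eq:W_i} is in hand. The whole proof is essentially a bookkeeping exercise, combining the permutation convention, the upper-triangular shape of $R$, and the orthonormality of $Q$'s columns.

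First I would unpack what ``column $k$ of $A\Pi = QR$'' says. Using the convention $\mentry{\Pi}{i}{j}=1 \iff \mapi(j)=i$, a direct computation gives $(A\Pi)_{r,k} = \mentry{A}{r}{\mapi(k)}$, so the $k$-th column of $A\Pi$ is $\cols{A}{\mapi(k)}$. Equating with the $k$-th column of $QR$ and invoking upper triangularity of $R$ (so that $\mentry{R}{i}{k} = 0$ for $i>k$) yields
\[
\cols{A}{\mapi(k)} \;=\; \sum_{i=1}^{k-1} \mentry{R}{i}{k}\,\cols{Q}{i} \;+\; \mentry{R}{k}{k}\,\cols{Q}{k}.
\]

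Next I would use Eq.~\ref{eq:W_i}, which identifies $\Wpik{k-1} = \set Q_{[k-1]}$. The first sum above lies in $\Wpik{k-1}$, whereas $\cols{Q}{k}$ is orthogonal to $\Wpik{k-1}$ because the columns of $Q$ are orthonormal; hence $\cols{Q}{k} \in \perpWpik{k-1}$. Therefore $\piproj{k-1}$ kills the second term and fixes the first, giving
\[
\perpiproj{k-1}\!\left(\cols{A}{\mapi(k)}\right) \;=\; \mentry{R}{k}{k}\,\cols{Q}{k}.
\]
Taking norms and using $\norminline{\cols{Q}{k}} = 1$ produces $\norm{\perpiproj{k-1}(\cols{A}{\mapi(k)})} = |\mentry{R}{k}{k}|$, which is the claimed equality once we invoke the standard convention for (pivoted) QR that the diagonal entries of $R$ are nonnegative. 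The only subtle point, and the ``hardest'' part of this otherwise routine argument, is matching the paper's permutation convention so that the column extracted on the left is indeed $\cols{A}{\mapi(k)}$ rather than $\cols{A}{\mapi^{-1}(k)}$; after that, the rest is immediate from triangularity and orthonormality.
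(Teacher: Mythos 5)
Your proof is correct, but it follows a slightly different route than the paper's. The paper's argument is a norm-squared computation: it uses the self-adjointness and idempotence of the orthogonal projection to write $\norminline{\perpiproj{k-1}(\cols{A}{\mapi(k)})}^2 = (\cols{A}{\mapi(k)})^\ast\,\perpiproj{k-1}(\cols{A}{\mapi(k)})$, then substitutes the explicit Gram--Schmidt formula of Eq.~\ref{eq:Q}, namely $\perpiproj{k-1}(\cols{A}{\mapi(k)}) = \norminline{\perpiproj{k-1}(\cols{A}{\mapi(k)})}\,\cols{Q}{k}$, and finally identifies $(\cols{A}{\mapi(k)})^\ast\cols{Q}{k} = \mentry{R}{k}{k}$ and cancels. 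You instead expand the $k$-th column of $A\Pi = QR$ using the upper triangularity of $R$, apply $\perpiproj{k-1}$ to kill the $\Wpik{k-1}$-part and isolate $\mentry{R}{k}{k}\cols{Q}{k}$, and take norms. Both proofs hinge on the same identity $\perpiproj{k-1}(\cols{A}{\mapi(k)}) = \mentry{R}{k}{k}\cols{Q}{k}$; the difference is that the paper obtains the sign (and hence the equality without absolute values) for free from Eq.~\ref{eq:Q}, where $\cols{Q}{k}$ is by construction the \emph{positively} normalized residual, so $\mentry{R}{k}{k} = (\cols{Q}{k})^\ast\cols{A}{\mapi(k)} \geq 0$ automatically, whereas you must appeal to the external convention that the diagonal of $R$ is nonnegative. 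That appeal is legitimate here precisely because the paper's decomposition is the Gram--Schmidt one, but it is worth noting that your argument, unlike the paper's, does not use Eq.~\ref{eq:Q} at all and so proves the weaker statement $\absinline{\mentry{R}{k}{k}} = \norminline{\perpiproj{k-1}(\cols{A}{\mapi(k)})}$ for an arbitrary pivoted QR factorization.
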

\begin{proof}
Since $\perpiproj{k-1}$ is an orthogonal projection, then $(\perpiproj{k-1})^\ast \perpiproj{k-1} = \perpiproj{k-1}$. Therefore,
\begin{eqnarray*}
\norm{\perpiproj{k-1}(\cols{A}{\mapi(k)})}^2 & = & (\perpiproj{k-1}(\cols{A}{\mapi(k)}))^\ast \perpiproj{k-1}(\cols{A}{\mapi(k)})\\
& = & (\cols{A}{\mapi(k)}))^\ast \perpiproj{k-1}(\cols{A}{\mapi(k)})\\
& = &\norm{\perpiproj{k-1}(\cols{A}{\mapi(k)})}(\cols{A}{\mapi(k)}))^\ast \cols{Q}{[k]}\\
& = &\norm{\perpiproj{k-1}(\cols{A}{\mapi(k)})}\mentry R{k}{k}.
\end{eqnarray*}
\end{proof}
Lemma~\ref{lem: q_rec} stresses the recursive relations between $Q$'s columns. This relation will be used in  Section~\ref{sec: oose}.
\begin{lemma}\label{lem: q_rec}
For any $k\in[\rho]$, $\cols{Q}{k} = (\cols{A}{\mapi(k)}-\sum_{i=1}^{k-1}\mentry{R}{i}{k} \cols{Q}{i})/\mentry{R}{k}{k}.$
\end{lemma}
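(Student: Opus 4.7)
The plan is to read off the recursion directly from the pivoted QR identity $A\Pi=QR$ (Eq.~\ref{eq:pqr}) by examining its $k$-th column, then use upper-triangularity of $R$ to truncate the sum, and finally appeal to Lemma~\ref{lem:rii} to justify the division by $\mentry{R}{k}{k}$.

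First, I would extract the $k$-th column of both sides of $A\Pi=QR$. On the left, by the definition of the permutation matrix $\Pi$ in Section~\ref{sec:notation}, $\cols{(A\Pi)}{k} = \cols{A}{\mapi(k)}$. On the right, $\cols{(QR)}{k} = Q\cols{R}{k} = \sum_{i=1}^{\rho}\mentry{R}{i}{k}\cols{Q}{i}$. Since $R$ is upper triangular, $\mentry{R}{i}{k}=0$ for all $i>k$, so the sum collapses to $\sum_{i=1}^{k}\mentry{R}{i}{k}\cols{Q}{i}$. Equating the two expressions and isolating the $i=k$ term gives
\[
\mentry{R}{k}{k}\,\cols{Q}{k} \;=\; \cols{A}{\mapi(k)} - \sum_{i=1}^{k-1}\mentry{R}{i}{k}\,\cols{Q}{i}.
\]

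To divide by $\mentry{R}{k}{k}$, I need to know that it is nonzero for every $k\in[\rho]$. This is exactly where Lemma~\ref{lem:rii} is invoked: $\mentry{R}{k}{k}=\norminline{\perpiproj{k-1}(\cols{A}{\mapi(k)})}$, and this quantity would vanish only if $\cols{A}{\mapi(k)}\in\Wpik{k-1}=\set{A}_{\mapi([k-1])}$, which would force the columns $\cols{A}{\mapi([k])}$ to span a subspace of dimension at most $k-1$; that contradicts the fact that $A$ has rank $\rho\geq k$ and that pivoted QR selects, at each step, a column not yet in the span of the previously chosen ones.

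The only conceivable obstacle is this non-degeneracy of $\mentry{R}{k}{k}$, but it is essentially a restatement of how pivoting interacts with the rank of $A$, and Lemma~\ref{lem:rii} already packages the needed formula. Everything else is a one-line column-wise reading of the matrix identity, so the proof should be very short.
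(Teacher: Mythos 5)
Your proof is correct, but it reaches the recursion by a different road than the paper does. The paper works forward from the Gram--Schmidt construction: it combines Eq.~\ref{eq:Q} (which defines $\cols{Q}{k}$ as the normalized residual $\perpiproj{k-1}(\cols{A}{\mapi(k)})$) with the identity $\mentry{R}{i}{j}=(\cols{Q}{i})^\ast\cols{A}{\mapi(j)}$ to write the projection $\piproj{k-1}(\cols{A}{\mapi(k)})$ as $\sum_{i=1}^{k-1}\mentry{R}{i}{k}\cols{Q}{i}$, and then invokes Lemma~\ref{lem:rii} substantively, to identify the normalizing constant $\norminline{\perpiproj{k-1}(\cols{A}{\mapi(k)})}$ with $\mentry{R}{k}{k}$. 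You instead read the recursion off the matrix identity $A\Pi=QR$ column by column, truncate the sum by upper-triangularity, and use Lemma~\ref{lem:rii} only to certify that $\mentry{R}{k}{k}\neq 0$; your rank/pivoting argument for that non-vanishing is sound (if the maximal residual at step $k$ were zero, every column would lie in $\Wpik{k-1}$ and $A$ would have rank below $\rho$), and the paper leaves this point implicit since Eq.~\ref{eq:Q} already presumes the residual is nonzero for $k\in[\rho]$. Your route is more purely algebraic and arguably more elementary, at the small cost of having to argue non-degeneracy of the diagonal entry explicitly; the paper's route gets the formula and the denominator in one stroke because Lemma~\ref{lem:rii} hands it the value of $\mentry{R}{k}{k}$ directly. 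Both are complete proofs.
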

\begin{proof}
According to Eqs.~\ref{eq:Q} and~\ref{eq:proj_rep}, and since $\mentry{R}{i}{j} = (\cols{Q}{i})^\ast\cols{A}{\mapi(j)}$ for any $i\in[\rho], j\in[n]$, we have $\map W_{\mapi([k-1])}(\cols{A}{\mapi(k)}) = \sum_{i=1}^{k-1}\mentry{R}{i}{k} \cols{Q}{i}$. Therefore, due to Eq.~\ref{eq:r_ii}, the lemma is proved.

\end{proof}

\subsection{Incomplete pivoted QR-based dimensionality reduction - \\theoretical background}\label{sec:dr}
The geometry of $A$'s (permuted) columns is isomorphic to the geometry of the the $R$'s columns, i.e. for any $i,j\in[n]$, $(\cols{A}{\mapi(i)})^\ast \cols{A}{\mapi(j)} = (\cols{R}{i})^\ast \cols{R}{j}$. Thus, the upper triangularity of $R$ suggests to embed the dataset $\set A$ by an incomplete (truncated) version of  $R$'s rows. Mathematically, following Eq.~\ref{eq:Fdecomp}, if we set $\set S=\Wpik{s}$ and the orthogonal map $\map O_{\set S}:\Rn{m}\to\Rn{s}$ is
\begin{equation}
\label{eq:O}
\map O_{\set S}(\vect v)\triangleq (\cols{Q}{[s]})^\ast\vect v,
\end{equation}
then, due to the orthogonality of $Q$'s columns and Eqs.~\ref{eq:pqr},~\ref{eq:W_i} and~\ref{eq:proj_rep}, the $s$-dimensional embedding from Eq.~\ref{eq:Fdecomp} becomes \begin{equation}
\label{eq:compact_F_s}
\map F_s(\vect v) = (\cols{Q}{[s]})^\ast \vect v
\end{equation}
and specifically,
\begin{equation}
\label{eq:embed}
\map F_s(\cols{A}{\mapi(i)}) = \mentry{R}{[s]}{i},\quad i\in[n].
\end{equation}
Notice that Eq.~\ref{eq:cond2} is satisfied by $\map O_{\set S}$ from Eq.~\ref{eq:O}. Although this specific choice for $\map O_\set S$ yields $\map O_\set S = \map F_s$, this is not always the case since, as aforementioned, $\map O_\set S$ is not unique. For example, in Section~\ref{sec:PIQR}, a different choice of $\map O_\set S$ is presented. The incompleteness of the discussed QR decomposition is reflected in Eq.~\ref{eq:embed}, where the $s$-dimensional embedding is defined via only a partial set of $R$'s rows. According to the triangularity of $R$, the geometry of such an embedding is exact on the basis elements of $\set S$ which are
\begin{equation}
\label{eq:dict}
\set D\triangleq\{\cols{A}{\mapi(1)},\ldots,\cols{A}{\mapi(s)}\}.
\end{equation}
This set is referred to as the \emph{dictionary} of $\set A$ and its elements are referred to as \emph{pivots}. As Lemma~\ref{prop:fund} suggests, a careful choice of $\Pi$ might result in a low-dimensional distortion $\map F_s$ of $\set A$. An algorithm for such a choice is presented in Section~\ref{sec:PIQR}.

We conclude this section with an example that demonstrates the permutation's significance. Let $A$ be the following matrix:
\begin{equation*}
A = \left( \begin{array}{ccccccc}
1 & 1 & 1 & 1 & 1 & 1 & 1\\
0 & 1 & 1 & 1 & 1 & 1 & 1\\
0 & 0 & 1 & 1 & 1 & 1 & 1\\
0 & 0 & 0 & 1 & 1 & 1 & 1\\
0 & 0 & 0 & 0 & 1 & 1 & 1\\
0 & 0 & 0 & 0 & 0 & 1 & 1\\
0 & 0 & 0 & 0 & 0 & 0 & 20
\end{array} \right).
\end{equation*}
Then, in order to achieve a $3$-distortion for $\Pi = I_7$, we must have $s=\rho=7$. On the other hand, one can verify that for any permutation that satisfies $\mapi([3])=\{1,4,7\}$, $\norminline{\perproj{\mapi([3])}(\cols{A}{i})} < 1.5,~i\in[7]$, which according to Lemma~\ref{prop:fund} is a sufficient condition for a $3$-dimensional embedding of $\set A$, with distortion rate bounded by $3$. Consequently, $F_s$ is an $s$-dimensional $2\mu$-distortion of $\set A$, with $s=3$ and $\mu=1.5$.

Proposition~\ref{prop:main}, which is a rephrased version of Lemma~\ref{prop:fund} in terms of the pivoted incomplete QR, concludes the above discussion:
\begin{proposition}
\label{prop:main}
Let $\mu>0$. If there exist a permutation $\mapi$ and $s\in[\rho]$ for which $\norminline{\perpiproj{s}(\cols{A}{i})} <\mu$ for any $i\in[n]$, then $\map F_s$ from Eq.~\ref{eq:compact_F_s} is an $s$-dimensional $2\mu$-distortion of $\set A$.
\end{proposition}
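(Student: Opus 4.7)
The plan is to show that Proposition~\ref{prop:main} is a direct specialization of Lemma~\ref{prop:fund}, obtained by instantiating the abstract subspace $\set S$ and orthogonal map $\map O_{\set S}$ with the concrete objects produced by the pivoted QR decomposition. So the task reduces to verifying that the chosen $\set S$ and $\map O_{\set S}$ satisfy the two hypotheses of Lemma~\ref{prop:fund} (Step~1 and Step~2), and then identifying the resulting composition with $\map F_s$ as given in Eq.~\ref{eq:compact_F_s}.

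First I would set $\set S \triangleq \Wpik{s}$, which by Eq.~\ref{eq:W_i} equals $\set Q_{[s]}$, an $s$-dimensional subspace of $\Rn{m}$ since $\cols{Q}{[s]}$ has orthonormal columns. With this choice, $\proj{\set S} = \piproj{s}$ and $\perproj{\set S} = \perpiproj{s}$. The Step~1 hypothesis $\norminline{\perproj{\set S}(\cols{A}{i})} \leq \mu$ for every $i \in [n]$ is then exactly the assumption of the proposition, so Step~1 is immediate.

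Next I would verify Step~2 for $\map O_{\set S}(\vect v) \triangleq (\cols{Q}{[s]})^\ast \vect v$ from Eq.~\ref{eq:O}. Given any $\vect v \in \set S = \set Q_{[s]}$, write $\vect v = \cols{Q}{[s]} \vect w$ for some $\vect w \in \Rn{s}$; then $\map O_{\set S}(\vect v) = (\cols{Q}{[s]})^\ast \cols{Q}{[s]} \vect w = \vect w$ because $\cols{Q}{[s]}$'s columns are orthonormal, and $\norminline{\vect v} = \norminline{\cols{Q}{[s]} \vect w} = \norminline{\vect w}$ for the same reason. This gives Eq.~\ref{eq:cond2}.

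Finally I would identify the composition $\map O_{\set S} \circ \proj{\set S}$ with the map $\map F_s$ of Eq.~\ref{eq:compact_F_s}. Using Eq.~\ref{eq:proj_rep}, $\proj{\set S}(\vect v) = \cols{Q}{[s]} (\cols{Q}{[s]})^\ast \vect v$, so $(\map O_{\set S} \circ \proj{\set S})(\vect v) = (\cols{Q}{[s]})^\ast \cols{Q}{[s]} (\cols{Q}{[s]})^\ast \vect v = (\cols{Q}{[s]})^\ast \vect v = \map F_s(\vect v)$, again by orthonormality of $\cols{Q}{[s]}$'s columns. Lemma~\ref{prop:fund} then delivers the $2\mu$-distortion conclusion at once. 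Honestly there is no real obstacle here, the whole proposition is a restatement of Lemma~\ref{prop:fund} in the language of pivoted QR; the only thing to be careful about is the bookkeeping that $\proj{\Wpik{s}} = \piproj{s}$ via Eq.~\ref{eq:W_i}, which is what makes the hypothesis on $\perpiproj{s}$ match Eq.~\ref{eq:cond1}.
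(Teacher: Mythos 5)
Your proof is correct and takes essentially the same route as the paper: the paper presents Proposition~\ref{prop:main} as a rephrasing of Lemma~\ref{prop:fund} obtained by setting $\set S=\Wpik{s}$ and taking $\map O_{\set S}$ from Eq.~\ref{eq:O}, which is exactly your instantiation. You have merely written out the routine orthonormality checks (Eq.~\ref{eq:cond2} and the identification of $\map O_{\set S}\circ\proj{\set S}$ with $\map F_s$) that the paper leaves implicit in the discussion preceding the proposition.
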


\subsubsection{Stability to noise}\label{sec:noise}

In real-life, data may be noisy. Therefore, instead of analyzing clean data that is stored in $A$, a noisy version $\tilde A = A + N$ is analyzed where the matrix $N$ represents an additive noise. Proposition~\ref{prop:noise} shows that a distortion by a noisy data is also a distortion of the original clean data, where the error originated by noise is additive.

 \begin{proposition}\label{prop:noise}
Let $A$ be an $m\times n$ data matrix and let $\tilde A = A+N$, where $N$ is a noise matrix of the same size of $A$. Assume that $\norminline{N}\leq\eta$ for some $\eta\geq 0$, and let $\tilde{\map F}_s$ be a $2\mu$-distortion of  $\tilde A$'s columns as defined in Eq.~\ref{eq:embed}. Then, $\tilde{\map F}_s$ is a $2(\mu+\eta)$-distortion of  $A$'s columns.
\end{proposition}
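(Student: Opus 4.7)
The plan is to mimic the proof of Lemma~\ref{prop:fund}. Because $\tilde{\map F}_s$ is built from the pivoted QR decomposition of $\tilde A$ in accordance with Eq.~\ref{eq:embed}, it factors as $\tilde{\map F}_s = \map O_{\tilde{\set S}}\circ\proj{\tilde{\set S}}$, where $\tilde{\set S}\subset\Rn{m}$ is the $s$-dimensional subspace spanned by the first $s$ pivots of $\tilde A$ and $\map O_{\tilde{\set S}}$ is the associated isometric alignment. Repeating verbatim the first chain of inequalities in the proof of Lemma~\ref{prop:fund} yields $0\le\norminline{\vect v}-\norminline{\tilde{\map F}_s(\vect v)}\le\norminline{\perproj{\tilde{\set S}}(\vect v)}$ for every $\vect v\in\Rn{m}$. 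Specializing to $\vect v=\cols{A}{i}-\cols{A}{j}$ and applying the triangle inequality to the projection residual, the proposition reduces to the uniform column-wise estimate $\norminline{\perproj{\tilde{\set S}}(\cols{A}{i})}\le\mu+\eta$ for every $i\in[n]$.

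To establish this estimate, I would decompose $\cols{A}{i}=\cols{\tilde A}{i}-\cols{N}{i}$ and apply the triangle inequality once more, which gives $\norminline{\perproj{\tilde{\set S}}(\cols{A}{i})}\le\norminline{\perproj{\tilde{\set S}}(\cols{\tilde A}{i})}+\norminline{\perproj{\tilde{\set S}}(\cols{N}{i})}$. The first summand is bounded by $\mu$ by the very construction of $\tilde{\map F}_s$ as the output of the pivoted QR algorithm on $\tilde A$ with distortion parameter $\mu$ (cf.~Eq.~\ref{eq:cond1} and the sufficient condition in Proposition~\ref{prop:main}). The second summand is bounded by $\norminline{\cols{N}{i}}\le\norminline{N}\le\eta$, because an orthogonal projection is a contraction and every column norm of a matrix is dominated by its operator norm. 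Summing the two column-wise bounds for the indices $i$ and $j$ then produces the desired $2(\mu+\eta)$.

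The main subtlety I foresee is that one must work at the level of projection residuals rather than at the level of pair distances. A seemingly natural alternative---bounding $\bigl|\norminline{\cols{A}{i}-\cols{A}{j}}-\norminline{\cols{\tilde A}{i}-\cols{\tilde A}{j}}\bigr|$ and $\bigl|\norminline{\tilde{\map F}_s(\cols{A}{i}-\cols{A}{j})}-\norminline{\tilde{\map F}_s(\cols{\tilde A}{i}-\cols{\tilde A}{j})}\bigr|$ separately, using linearity and contractivity of $\tilde{\map F}_s$ together with the global $2\mu$-distortion hypothesis on $\tilde A$---produces only the weaker $2\mu+4\eta$, since the noise then enters twice through the pair $\norminline{\cols{N}{i}}+\norminline{\cols{N}{j}}$. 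Exploiting instead the pointwise residual bound $\norminline{\perproj{\tilde{\set S}}(\cols{\tilde A}{i})}\le\mu$ that is baked into the QR construction absorbs one of these contributions into the projection-residual accounting and tightens the bound to the sharp $2(\mu+\eta)$.
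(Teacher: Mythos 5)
Your argument is correct and uses the same basic machinery as the paper's proof: a triangle-inequality bookkeeping that splits each per-column error into a $\mu$ part (the ICPQR residual of a column of $\tilde A$, bounded via Eq.~\ref{eq:cond1} applied to $\tilde A$) and an $\eta$ part (a column of $N$, bounded by $\norminline{N}$). Both you and the paper rely on the same implicit strengthening of the hypothesis, namely that the ``$2\mu$-distortion as defined in Eq.~\ref{eq:embed}'' comes with the column-wise residual bound $\norminline{\cols{\tilde A}{i}-\proj{\tilde{\set S}}(\cols{\tilde A}{i})}\leq\mu$ rather than only the pairwise distortion bound. The one substantive difference is \emph{where the map is evaluated}. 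You prove the literal reading of the statement under Definition~\ref{def: distortion}: you compare $\norminline{\cols{A}{i}-\cols{A}{j}}$ with $\norminline{\tilde{\map F}_s(\cols{A}{i})-\tilde{\map F}_s(\cols{A}{j})}$, which requires the residual bound $\norminline{\perproj{\tilde{\set S}}(\cols{A}{i})}\leq\norminline{\perproj{\tilde{\set S}}(\cols{\tilde A}{i})}+\norminline{\perproj{\tilde{\set S}}(\cols{N}{i})}\leq\mu+\eta$ for the \emph{clean} columns. The paper instead compares $\norminline{\cols{A}{i}-\cols{A}{j}}$ with $\norminline{\tilde{\map F}_s(\cols{\tilde A}{i})-\tilde{\map F}_s(\cols{\tilde A}{j})}$, i.e., with the embedding the algorithm actually outputs (the columns $\mentry{\tilde R}{[s]}{i}$ computed from the noisy data), and splits $\norminline{\cols{A}{i}-\proj{\tilde{\set S}}(\cols{\tilde A}{i})}\leq\norminline{\cols{N}{i}}+\norminline{\cols{\tilde A}{i}-\proj{\tilde{\set S}}(\cols{\tilde A}{i})}$. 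Your version answers ``does the noisy subspace still serve the clean data well?''; the paper's version answers ``does the embedding computed from noisy observations still reflect the clean geometry?''. Both are true with the identical bound $2(\mu+\eta)$ and essentially interchangeable proofs, so there is no gap---just be aware that the statement you established is the literal one, while the paper's displayed chain establishes the operationally relevant variant. Your closing remark about the weaker $2\mu+4\eta$ obtained by routing the noise through the pairwise distortion hypothesis twice is a sensible observation and correctly identifies why the residual-level argument is the right one.
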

\begin{proof}
Let $\tilde{\map F}_s$ be the map defined in Eq.~\ref{eq:Fdecomp} with the corresponding elements $\tilde{\set S}$, ${\map O}_{\tilde{\set S}}$  and $\proj{\tilde{\set S}}$. Then, we have \begin{eqnarray*}
\abs{\norm{\cols{ A}{i}-{\cols{ A}{j}}}-\norm{\tilde{\map F}_s(\cols{\tilde A}{i})-\tilde{\map F}_s(\cols{\tilde A}{j})}} & = & \abs{\norm{\cols{A}{i}-{\cols{A}{j}}}-\norm{\proj{\tilde {\set S}}(\cols{\tilde A}{i})-\proj{\tilde{\set S}}(\cols{\tilde A}{j})}}\\
& \leq & \norm{\cols{A}{i}-{\cols{A}{j}}-\proj{\tilde {\set S}}(\cols{\tilde A}{i})+\proj{\tilde{\set S}}(\cols{\tilde A}{j})}\\
& \leq & \norm{\cols{A}{i}-\proj{\tilde {\set S}}(\cols{\tilde A}{i})} + \norm{\cols{A}{j}-\proj{\tilde{\set S}}(\cols{\tilde A}{j})}\\
& \leq & \norm{\cols{A}{i}-\cols{\tilde A}{i}} + \norm{\cols{\tilde A}{i}-\proj{\tilde {\set S}}(\cols{\tilde A}{i})}\\
& + & \norm{\cols{A}{j}-\cols{\tilde A}{j}} + \norm{\cols{\tilde A}{j}-\proj{\tilde {\set S}}(\cols{\tilde A}{j})}\\
& = & \norm{\cols{N}{i}} + \norm{\cols{\tilde A}{i}-\proj{\tilde {\set S}}(\cols{\tilde A}{i})}\\
& + & \norm{\cols{N}{j}} + \norm{\cols{\tilde A}{j}-\proj{\tilde {\set S}}(\cols{\tilde A}{j})}\\
& \leq & 2(\mu+\eta).
\end{eqnarray*}
The first equality is due to Eqs.~\ref{eq:cond2} and~\ref{eq:Fdecomp} and the last inequality is due to the fact that $\norminline{\cols{N}{i}}\leq\norminline{N}$, the proposition's assumption and Eq.~\ref{eq:cond1}.

\end{proof}
\subsubsection{Matrix approximation error}\label{sec:mat_appx}
The notions of low-rank matrix approximation and geometry-preserving dimensionality reduction are different but related. While the $s$-SVD of a data matrix enables an $s$-dimensional embedding of the associated data (as was established in Lemma~\ref{lem: spect_dist}), it can be shown that the incomplete QR factorization $\cols{Q}{[s]}\rows{R}{[s]}$ of the pivoted version of $A$ (see Eq.~\ref{eq:pqr}) can be used to form a low-rank approximation of $A$. The operator norm of an arbitrary matrix $M$, denoted by $\norminline{M}_{2}$, is equal to its maximal singular value. Thus, as was explained in Section~\ref{sec: svd}, it measures the maximal linear trend of the data stored in its columns (or rows). Therefore, the operator norm of the difference of two matrices measures the strength of the maximal linear trend of the associated error. Proposition~\ref{prop: approx} provides a bound for the approximation error of $A\Pi$ by its incomplete QR factorization. Its proof uses the Frobenius norm $\norminline{M}_{F}\triangleq \sqrt{ \sum_{i,j}\mentry{M}{i}{j}^2}$ of a matrix and the norms inequality $\norminline{M}_{2}\leq\norminline{M}_{F}$ for any matrix $M$.
\begin{proposition}\label{prop: approx}
Let $\mu$, $\mapi$, $s$ and $A$ satisfy the condition of Proposition~\ref{prop:main},  then $\norminline{A\Pi - \cols{Q}{[s]}\rows{R}{[s]}}_{\eta}\leq\mu\sqrt{\rho-s}$ for $\eta\in\{2,F\}$.
\end{proposition}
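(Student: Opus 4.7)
The plan is to identify the error matrix as the tail of the QR factorization and then reduce everything to a single Frobenius estimate. Partitioning $Q$ and $R$ blockwise, $A\Pi = QR = \cols{Q}{[s]}\rows{R}{[s]} + \cols{Q}{\{s+1,\ldots,\rho\}}\rows{R}{\{s+1,\ldots,\rho\}}$, so defining $E \triangleq A\Pi - \cols{Q}{[s]}\rows{R}{[s]}$ gives
\[
E = \cols{Q}{\{s+1,\ldots,\rho\}}\rows{R}{\{s+1,\ldots,\rho\}}.
\]
Because $\cols{Q}{\{s+1,\ldots,\rho\}}$ has orthonormal columns, left-multiplication by it is an isometry for both $\|\cdot\|_F$ and $\|\cdot\|_2$, giving $\|E\|_\eta = \|\rows{R}{\{s+1,\ldots,\rho\}}\|_\eta$ for $\eta\in\{2,F\}$. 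Combined with the stated inequality $\|M\|_2\leq\|M\|_F$, the whole proposition reduces to the single Frobenius estimate $\|\rows{R}{\{s+1,\ldots,\rho\}}\|_F \leq \mu\sqrt{\rho-s}$.

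The natural route for this Frobenius bound is a columnwise expansion. The $j$-th column of $R$ stores the Gram--Schmidt coefficients expressing $\cols{A}{\mapi(j)}$ against $\cols{Q}{1},\ldots,\cols{Q}{\rho}$, so its truncation to rows $s+1,\ldots,\rho$ holds the coordinates of $\perpiproj{s}(\cols{A}{\mapi(j)})$ in the orthonormal system $\cols{Q}{s+1},\ldots,\cols{Q}{\rho}$. Consequently the $j$-th column of $\rows{R}{\{s+1,\ldots,\rho\}}$ has Euclidean norm $\|\perpiproj{s}(\cols{A}{\mapi(j)})\| \leq \mu$ by the hypothesis of Proposition~\ref{prop:main}, while columns with $j\leq s$ contribute zero because $\cols{A}{\mapi(j)}\in\Wpik{s}$.

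The main obstacle is turning these per-column bounds into the claimed $\sqrt{\rho-s}$ factor rather than the naive $\sqrt{n-s}$ one. The key structural observation is that every column of $E$ lies in the $(\rho-s)$-dimensional subspace $\Wpik{\rho}\cap\Wpik{s}^\perp$, so $\mathrm{rank}(E)\leq \rho-s$. The plan is to combine this rank constraint with a spectral-norm bound $\|E\|_2\leq\mu$---which I would obtain from the pivoting property together with the monotonicity $\|\perpiproj{k-1}(v)\|\leq\|\perpiproj{s}(v)\|$ for $k\geq s+1$---and then apply the standard inequality $\|M\|_F\leq\sqrt{\mathrm{rank}(M)}\,\|M\|_2$ to conclude the target bound in both norms simultaneously.
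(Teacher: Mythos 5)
You have correctly isolated the crux of this proposition --- indeed more sharply than the paper does. The paper's own proof is exactly your first two paragraphs: at least $s$ columns of $E\triangleq A\Pi-\cols{Q}{[s]}\rows{R}{[s]}$ vanish, each remaining column equals $\perpiproj{s}(\cols{A}{\mapi(j)})$ and hence has norm at most $\mu$, and then $\norminline{E}_2\leq\norminline{E}_F$. But at that point the paper simply asserts that ``the rest'' number $\rho-s$, when in fact there are up to $n-s$ non-vanishing columns; so the paper's argument only delivers $\norminline{E}_F\leq\mu\sqrt{n-s}$. You were right to flag the passage from $\sqrt{n-s}$ to $\sqrt{\rho-s}$ as the real obstacle rather than glossing over it.

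Your proposed repair, however, does not go through. The rank bound $\mathrm{rank}(E)\leq\rho-s$ is fine, but the spectral bound $\norminline{E}_2\leq\mu$ does not follow from the per-column bounds, from the pivoting rule, or from the monotonicity $\norminline{\perpiproj{k-1}(\vect v)}\leq\norminline{\perpiproj{s}(\vect v)}$: a matrix whose columns each have norm at most $\mu$ can have operator norm as large as $\mu\sqrt{n-s}$ when those columns align. Concretely, take $\cols{A}{1}=c\,\ivect{e}{1}$ with $c>\mu$ and $\cols{A}{j}=0.9\mu\,\ivect{e}{2}$ for $j=2,\ldots,n$, where $\ivect{e}{1},\ivect{e}{2}$ are standard unit vectors. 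Then $\rho=2$, the condition of Proposition~\ref{prop:main} holds with $\mapi$ the identity and $s=1$ (every residual has norm $0.9\mu<\mu$), and $E$ is the rank-one matrix with $n-1$ identical columns $0.9\mu\,\ivect{e}{2}$, so $\norminline{E}_2=\norminline{E}_F=0.9\mu\sqrt{n-1}$, which exceeds the claimed bound $\mu\sqrt{\rho-s}=\mu$ for every $n\geq3$. The same example shows that the proposition as stated is false, so no strategy can close the gap; the bound that the column-counting argument actually establishes, and that should replace the statement, is $\norminline{A\Pi-\cols{Q}{[s]}\rows{R}{[s]}}_\eta\leq\mu\sqrt{n-s}$ for $\eta\in\{2,F\}$.
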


\begin{proof}
Following Eqs.~\ref{eq:pqr} and~\ref{eq:proj_rep}, and according to the orthonormality of $Q$'s columns, we have $\piproj{s}(\cols{A}{i})=\cols{Q}{[s]}\mentry{R}{[s]}{i}$ for any $i\in[n]$. On the other hand, due to the triangularity of $R$, $\cols{A}{\mapi([s])} = \cols{Q}{[s]}\mentry{R}{[s]}{[s]}$. Thus, at least $s$ columns from $A-\cols{Q}{[s]}\cols{R}{[s]}$ are vanishing, and the norms of the rest (mostly) $\rho-s$ are bounded by $\mu$, according to the proposition's assumption. This leads to $\norm{A-\cols{Q}{[s]}\cols{R}{[s]}}_{2}\leq\norm{A-\cols{Q}{[s]}\cols{R}{[s]}}_{F}\leq \mu\sqrt{\rho-s}$.
\end{proof}

\subsection{Incomplete pivoted QR-based (ICPQR) dimensionality reduction: implementation}\label{sec:PIQR}

Algorithm~\ref{alg: tpcd} iteratively constructs an incomplete pivoted $QR$ version of the data matrix $A$ to obtain a $2\mu$-distortion of $\set A$. In its $j$-th iteration, the algorithm selects the  pivot $\cols{A}{\mapi(j)}$, and projects the dataset $\set A$ on $\Wpik{j}$. Based on Step~1 from Section~\ref{sec:genappch}, the $j$-th pivot $\cols{A}{\mapi(j)}$ is chosen to be the element in $\set A$, whose approximation by its orthogonal projection on $\Wpik{j-1}$ is the worst. Thus, the permutation $\mapi:[n]\to[n]$ is determined by the following condition:
\begin{equation}
\label{eq:pi}
\mapi(j) = \arg\max_{i\in[n]\backslash\mapi([j-1])}\norm{\perpiproj{j-1}(\cols{A}{i})}.
\end{equation}
Then, the corresponding new column $\cols{Q}{j}$ and row $\rows{R}{j}$ are computed according to Eqs.~\ref{eq:pqr}, \ref{eq:Q} and~\ref{eq:proj_rep}. Since the columns permutation is updated in every iteration $j$, the columns of $\cols{R}{[j-1]}$ have to be permuted correspondingly. The algorithm terminates when the quantity in Eq.~\ref{eq:pi} is less than $\mu$. Thus $s$, which is the number of iterations required to provide a $2\mu$-distortion, is a non-increasing function of $\mu$, bounded from above by $\rho$ that is not known a-priori. When the algorithm ends, the correspondence rule, defined by Eq.~\ref{eq:embed}, provides an $s$-dimensional $2\mu$-distortion of $\set A$ according to Proposition~\ref{prop:main}. For $\mu=0$, the application of Algorithm~\ref{alg: tpcd} to $A$ results in a  complete pivoted QR factorization.

\IncMargin{1em}
\begin{algorithm}[h!]
\DontPrintSemicolon
\SetAlgoLined
\SetKwComment{tcp}{//}{}
\SetKwInOut{Input}{Input}\SetKwInOut{Output}{Output}
\Input{An $m\times n$ data matrix $A$ and a nonnegative distortion parameter $\mu$.}
\Output{An $m\times s$ matrix $Q$ whose columns are orthonormal, an $s\times n$ upper diagonal matrix $R$ and a permutation $\mapi$ such that the correspondence rule  defined by Eq.~\ref{eq:embed} is a $2\mu$-distortion of $A$'s columns.} \BlankLine
Initialization: set $\mapi = identity$, $\Pi = I_n$, $\delta > \mu$, and $j=0$ \\
\While{$\delta > \mu$}
{
    set $j=j+1$\\
    set $i_j = \arg\max_{i\in[n]\backslash\mapi([j-1])}\norminline{\perpiproj{j-1}(\cols{A}{i})} $ (see Eq.~\ref{eq:proj_rep})\\
    \tcp{consider $\mapi([0])=\emptyset$}\label{alg1:stp_col_choose}
    set $\vect \Delta = \perpiproj{j-1}(\cols{A}{i_j})$ and $\delta = \norm{\vect\Delta} $\\
    set $\cols{Q}{j} = \vect \Delta/\delta$\\
    switch $\mapi(j) \leftrightarrow \mapi(i_j)$, set $\Pi = \Pi\Pi_{j\leftrightarrow i_j}$ \\
    \tcp{$\Pi_{j\leftrightarrow i_j}$ is $I_n$ with columns $j,i_j$ swapped}
    set $R = R\Pi_{j\leftrightarrow i_j}$\label{alg1:stp_outpts1}\\
    set $\rows{R}{j} = (\cols{Q}{j})^\ast A\Pi$ \label{alg1:stp_outpts2}\\
} set $s = j$
 \caption{Incomplete pivoted QR (ICPQR) decomposition}
\label{alg: tpcd}
\end{algorithm}
\DecMargin{1em}

Let us make a couple of technical remarks concerning Algorithm~\ref{alg: tpcd}: 1. In case of limited computational or storage budget, Algorithm~\ref{alg: tpcd} can be easily modified to make a limited number of iterations $d$ or, equivalently, to provide a $d$-dimensional embedding. In this case, the distortion parameter $\mu$ is a non-increasing function of $d$. 2. The dictionary $\set D$ is chosen regardless to the data indexing order. This property ensures a relatively sparse dictionary as demonstrated in Section~\ref{subsec:cmp_idm}. 3.In order to achieve an optimal\footnote{A coordinates system that is determined by principal components.} coordinates system for the geometry represented by $R$, an SVD can be utilized. Mathematically, let $R=USV^\ast$ be the SVD decomposition of $R$, where $U$ and $S$ are $s\times s$ orthogonal and diagonal matrices, respectively, and $V$ is an $n\times s$ matrix, whose columns are orthonormal. Then, the map $\map{\hat O_{\set S}}:\Rn{m}\to\Rn{s}$,
\begin{equation}
\label{eq:Ohat}
\map{\hat O_{\set S}}(\vect v)\triangleq (QU)^\ast\vect v,\quad\vect v\in\Rn{m}
\end{equation}
is still isometric on $\set S$, as Step~2 in Section~\ref{sec:genappch} requires. Thus, following Eqs.~\ref{eq:pqr},~\ref{eq:embed} and~\ref{eq:Ohat}, the map $\map{\hat  F_s}:\Rn{m}\to\Rn{k}$ where $
\map{\hat  F_s}(\vect v)\triangleq \map{\hat O_{\set S}}\circ\piproj{s}(\vect v),\quad\vect v\in\Rn{m}
$
is still an $s$-dimensional $2\mu$ distortion of $\set A$. Moreover, $\map{\hat  F_s}$ is optimal in the sense that the axes are aligned correspondingly to the variances directions. The computational and storage costs of such an alignment are $O(ns^2)$ and $O(ns)$, respectively. Therefore, the total complexity of Algorithm~\ref{alg: tpcd} is not affected by this optional step (see Table~\ref{tbl:complexity}.)

\subsection{ICPQR reduced cost}\label{ssec:boost}
Equation~\ref{eq:embed} suggests that $Q$ is not needed for the low rank embedding of $\set A$ by $\map F_s$. In this section, we present a more efficient version of Algorithm~\ref{alg: tpcd}, by which the results in Section~\ref{sec:experimental} were obtained. The algorithm produces no $Q$ and applies no physical permutations.

Consider Eq.~\ref{eq:pqr} with $\mapi$ defined by Eq.~\ref{eq:pi}, then
\begin{equation}
\label{eq:prmtdQR}
A = Q\bar R,\quad \bar R\triangleq R\Pi^\ast,
\end{equation}
where $\bar R$ is no longer triangular. Algorithm~\ref{alg: q-less} is a translated version of Algorithm~\ref{alg: tpcd} to this case, where the permutation $\Pi$ is absorbed in $\bar R$. The low-dimensional embedding from Eq.~\ref{eq:embed} becomes
\begin{eqnarray}~\label{eq:embed2}
\map F_s(\cols{A}{i}) = \mentry{\bar R}{[s]}{i},\quad i\in[n].
\end{eqnarray}
For the establishment of Algorithm~\ref{alg: q-less}, steps~\ref{alg1:stp_col_choose} and~\ref{alg1:stp_outpts2} in Algorithm~\ref{alg: tpcd}, which are dependent on $Q$, are modified according to Eq.~\ref{eq:prmtdQR} to be $(\cols{A}{\mapi(i)})^\ast \cols{A}{\mapi(j)} = (\cols{\bar R}{\mapi(i)})^\ast \cols{\bar R}{\mapi(j)}=\sum_{\ell=1}^{\min\{i,j\}}\mentry{\bar R}{\ell}{\mapi(i)}\mentry{\bar R}{\ell}{\mapi(j)}$, for any $i,j\in[n]$. The upper limit in the sum is due to the upper triangularity of $R$. Thus, the following recursive relations between the entries of $\bar R$ hold:
\begin{equation}\label{eq:q_less_r_ij}
\mentry{\bar R}{i}{\mapi(j)} = \left\{\begin{array}{ll} \mentry{\bar R}{i}{\mapi(i)}^{-1}\left(u_{ij}- \sum_{\ell=1}^{i-1} \mentry{\bar R}{\ell}{\mapi(i)}\mentry{\bar R}{\ell}{\mapi(j)}\right) & \textrm{if $i<j$}\\
\left(u_{ij}- \sum_{\ell=1}^{i-1} (\mentry{\bar R}{\ell}{\mapi(i)})^2\right)^{1/2}& \textrm{if $i=j$}\\
0 & \textrm{if $i>j$}
\end{array}\right.,
\end{equation}
where $u_{ij} \triangleq (\cols{A}{\mapi(i)})^\ast \cols{A}{\mapi(j)}$. Moreover, comparing Eq.~\ref{eq:r_ii} with Eq.~\ref{eq:q_less_r_ij} yields
$$
\norm{\perpiproj{i-1}(\cols{A}{\mapi(i)})}  = \left(u_{ii}- \sum_{\ell=1}^{i-1} (\mentry{\bar R}{\ell}{\mapi(i)})^2\right)^{1/2}.
$$
Thus, the pivoting criterion from Eq.~\ref{eq:pi} becomes $$\mapi(j) = \arg\max_{i\in[n]\backslash[j-1]}\left(u_{ii}- \sum_{\ell=1}^{i-1} (\mentry{\bar R}{\ell}{\mapi(i)})^2\right)^{1/2}.$$

\IncMargin{1em}
\begin{algorithm}
    \DontPrintSemicolon
    \SetAlgoLined
    \SetKwComment{tcp}{//}{}
    \SetKwInOut{Input}{Input}\SetKwInOut{Output}{Output}
    \Input{An $m\times n$ matrix $A$ and a nonnegative distortion parameter $\mu$.}
\Output{An $s\times n$ matrix $\bar R$ and a permutation $\mapi$, for which the embedding defined by Eq.~\ref{eq:embed2} is a $2\mu$-distortion of $A$'s columns.}
    \BlankLine
    Initialization: set $\mapi = identity$, $j=0$, $\delta > \mu^2$, $\vect y= \vect 0_n$ (the all zeros vector of length $n$), and $\vect z\in\Rn{n}$, for which $\ventry{z}{i} = \norminline{\cols{A}{i}} ^2$, $i\in[n]$ \label{alg2:init} \\
    \While{ $\delta \geq \mu^2$}
    {
        set $j=j+1$\\
        set $i_j = \arg\max_{i\in[n]\backslash[j-1]}(\ventry{z}{\mapi(i)}- \ventry{y}{\mapi(i)})$ \label{alg2:max_arg}\\
        switch $\mapi(j) \leftrightarrow \mapi(i_j)$     \label{alg2:switch}\\
        set $\delta =  \ventry{z}{\mapi(j)}- \ventry{y}{\mapi(j)} $\label{alg2:delta}\\
        for every $i\in[j-1]$ set $\mentry{\bar R}{j}{\mapi(i)} = 0$  \label{alg2:under_diag}\\
        set $\mentry{\bar R}{j}{\mapi(j)} = \delta^{1/2}$    \label{alg2:diag}\\
        \For {$i\in [n]\backslash[j]$} 
        {
            \quad set $u_{ji} = (\cols{A}{\mapi(j)})^\ast \cols{A}{(\mapi(i))}$ \label{alg2:above_diag1}\\
            \quad $\mentry{\bar R}{j}{\mapi(i)} = (u_{ji}-\sum_{\ell=1}^{j-1}\mentry{\bar R}{\ell}{\mapi(j)} \mentry{\bar R}{\ell}{\mapi(i)})/\mentry{\bar R}{j}{\mapi(j)}$\label{alg2:above_diag2}\\
            \quad set $\ventry{y}{\mapi(i)} = \ventry{y}{\mapi(i)}+\mentry{\bar R}{j}{\mapi(i)}$    \label{alg2:above_diag3}\\
        }

    }
    set $s = j$
    \caption{Incomplete pivoted Q-less QR decomposition}
    \label{alg: q-less}
\end{algorithm}
\DecMargin{1em} The resulted dictionary is the set $\set D$ as defined in Eq.~\ref{eq:dict}.

Table~\ref{tbl:complexity} presents the computational and storage complexities of Algorithm~\ref{alg: q-less}. The storage of the input matrix $A$ was not taken into account since in the nature of Algorithm~\ref{alg: q-less} there is no need to have its complete storage.  For example, the relevant rows and columns of $A$ can be individually computed at each iteration. Therefore, the total storage complexity is smaller than the required storage of the SVD, which is $O(\max\{m,n\}^2)$. The computational complexity of Algorithm~\ref{alg: q-less} depends on $\mu$. In the worst case, when $\mu= 0$ and  $s=\rho$, the complexity of Algorithm~\ref{alg: q-less} equals to the complexity of the thin rank-$\rho$ SVD. Otherwise, Algorithm~\ref{alg: q-less} is more efficient than SVD.
\begin{table}[ht]
    \centering
    \begin{tabular}{|l|l|l|}
\hline Step & Operations & Storage (not including storage of $A$) \\
        \hline  \ref{alg2:init} & $O(mn)$ & $O(n)$ \\
        \hline  \ref{alg2:max_arg}& $O(ns-s^2)$ & $O(1)$ \\
        \hline  \ref{alg2:under_diag}& $O(s^2)$ & $O(s^2)$ \\
        \hline  \ref{alg2:above_diag1}& $O(m(ns-s^2))$ & $O(m)$\\
        \hline  \ref{alg2:above_diag2}& $O(s(ns-s^2))$ & $O(ns-s^2)$\\
        \hline  \ref{alg2:above_diag3}& $O(ns-s^2)$ & $O(1)$\\
        \hline Total: & $O(mns) $ & $ O(ns) $\\ \hline
    \end{tabular}
    \caption{Computational and storage complexities of Algorithm~\ref{alg: q-less}.}
\end{table}\label{tbl:complexity}

\subsection{Out-of-sample extension and anomaly detection algorithms}\label{sec: oose}
Two fundamental questions  may be naturally asked for an out-of-sample data point $\vect x\in\Rn{m}\backslash\set A$: first, is it normal related to the training dataset $\set A$? and secondly, if it is, then how can the produced low rank embedding be extended to this data point? This section addresses these two questions. Since $\map F_s$ (Eq.~\ref{eq:compact_F_s}) is defined for the entire space of $\Rn{m}$, it is used to define an out-of-sample extension for the embedding from Eq.~\ref{eq:embed2} of  any $\vect x\in\Rn{m}$ and, based on this, to detect anomalies.

\subsubsection{Out-of-sample extension}\label{sec:oose}

As mentioned above, an out-of-sample extension of the embedding from Eq.~\ref{eq:embed2} is defined by Eq.~\ref{eq:compact_F_s} for the entire $\Rn{m}$. As discussed in Section~\ref{ssec:boost}, since Algorithm~\ref{alg: q-less} produces no $Q$, $\map F_s$ cannot be directly applied to an out-of-sample point $\vect x\in \Rn{m}$. Therefore, a Q-less tool for calculating the out-of-sample extension, as defined in Eq.~\ref{eq:compact_F_s}, is provided in Algorithm~\ref{alg: q-less_oos} that is based on the following proposition:
\begin{proposition}\label{prop:q_less_oose}
Let $\bar R$ be the $s\times n$ matrix produced by Algorithm~\ref{alg: q-less}, $\vect x\in\Rn{m}$ and $\vect f = \map F_s(\vect x)\in\Rn{s}$ as defined in Eq.~\ref{eq:compact_F_s}. Then, the following recursive relation holds for the coordinates of $\vect f$: $$\ventry{f}{j} = (\mentry{\bar R}{j}{\mapi(j)})^{-1}(\cols{A}{\mapi(j)})^\ast\vect x - \sum_{i=1}^{j-1} \mentry{\bar R}{i}{\mapi(j)} \ventry{f}{i},\qquad j\in[s].$$
\end{proposition}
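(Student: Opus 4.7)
The plan is to reduce the statement to the recursion for the columns of $Q$ given by Lemma~\ref{lem: q_rec}, and then translate that recursion from the triangular factor $R$ to the permuted factor $\bar R = R\Pi^\ast$ used by Algorithm~\ref{alg: q-less}. Because $\map F_s(\vect x) = (\cols{Q}{[s]})^\ast \vect x$ by Eq.~\ref{eq:compact_F_s}, each coordinate is simply $\ventry{f}{j} = (\cols{Q}{j})^\ast \vect x$, so once I have a recursion for $\cols{Q}{j}$ in terms of $\cols{Q}{1},\ldots,\cols{Q}{j-1}$ and the pivot column $\cols{A}{\mapi(j)}$, I can just dot with $\vect x$.

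First I would invoke Lemma~\ref{lem: q_rec}, which states $\cols{Q}{j} = (\mentry{R}{j}{j})^{-1}\bigl(\cols{A}{\mapi(j)}-\sum_{i=1}^{j-1}\mentry{R}{i}{j}\cols{Q}{i}\bigr)$. Next I would unpack the relationship between $R$ and $\bar R$: from $\bar R = R\Pi^\ast$ and the fact that $\mentry{\Pi}{i}{j}=1$ iff $\mapi(j)=i$, one obtains $\mentry{\bar R}{i}{\mapi(k)} = \mentry{R}{i}{k}$ for every $i\in[s],k\in[n]$. In particular $\mentry{R}{j}{j} = \mentry{\bar R}{j}{\mapi(j)}$ and $\mentry{R}{i}{j} = \mentry{\bar R}{i}{\mapi(j)}$. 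Substituting, the recursion becomes
\begin{equation*}
\cols{Q}{j} = (\mentry{\bar R}{j}{\mapi(j)})^{-1}\Bigl(\cols{A}{\mapi(j)} - \sum_{i=1}^{j-1} \mentry{\bar R}{i}{\mapi(j)}\, \cols{Q}{i}\Bigr).
\end{equation*}

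Taking the inner product of both sides with $\vect x$ and recalling $\ventry{f}{i} = (\cols{Q}{i})^\ast \vect x$ yields exactly the claimed recursion
\begin{equation*}
\ventry{f}{j} = (\mentry{\bar R}{j}{\mapi(j)})^{-1}\Bigl((\cols{A}{\mapi(j)})^\ast \vect x - \sum_{i=1}^{j-1} \mentry{\bar R}{i}{\mapi(j)}\, \ventry{f}{i}\Bigr).
\end{equation*}
There is no real obstacle here beyond being careful with the permutation bookkeeping; the only thing to verify is that $\mentry{\bar R}{j}{\mapi(j)}\neq 0$ for $j\in[s]$, which follows from Lemma~\ref{lem:rii} together with the termination criterion of Algorithm~\ref{alg: q-less} (the algorithm only increments $j$ while $\delta>\mu^2\geq 0$, so all diagonal pivots $\norm{\perpiproj{j-1}(\cols{A}{\mapi(j)})}$ are strictly positive, hence so are the corresponding $\mentry{\bar R}{j}{\mapi(j)}$). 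Thus the recursion is well-defined and the proposition follows.
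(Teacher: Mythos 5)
Your proof is correct and follows essentially the same route as the paper's: both start from $\ventry{f}{j} = (\cols{Q}{j})^\ast\vect x$, invoke Lemma~\ref{lem: q_rec}, and then pass from $R$ to $\bar R$ via Eq.~\ref{eq:prmtdQR}. Your added remark on the non-vanishing of the diagonal entries $\mentry{\bar R}{j}{\mapi(j)}$ is a small but welcome extra that the paper leaves implicit.
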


\begin{proof}
According to Eq.~\ref{eq:compact_F_s}, $\ventry{f}{j} = (\cols{Q}{j})^\ast \vect x$. Thus, due to the recursive relations of $Q$'s columns, as presented in Lemma~\ref{lem: q_rec}, we have \begin{eqnarray*}
\ventry{f}{j} & = & (\mentry{R}{j}{j})^{-1}((\cols{A}{\mapi(j)})^\ast\vect x - \sum_{i=1}^{j-1}\mentry{R}{i}{j} (\cols{Q}{i})^\ast\vect x)\\
& = & (\mentry{R}{j}{j})^{-1}((\cols{A}{\mapi(j)})^\ast\vect x - \sum_{i=1}^{j-1}\mentry{R}{i}{j} \ventry{f}{i})\\
& = & (\mentry{\bar R}{j}{\mapi(j)})^{-1}((\cols{A}{\mapi(j)})^\ast\vect x - \sum_{i=1}^{j-1}\mentry{\bar R}{i}{\mapi(j)} \ventry{f}{i}),
\end{eqnarray*}
where the last equality is due to Eq.~\ref{eq:prmtdQR}.
\end{proof}

Since the out-of-sample extension of a data point $\vect x\in\Rn{m}$ is its orthogonal projection on the $s$-dimensional dictionary subspace $\set S$, the only required information are the geometric relations between $\vect x$ and the elements of the dictionary $\set D$ (Eq.~\ref{eq:dict}), as Proposition~\ref{prop:q_less_oose} shows. Algorithm~\ref{alg: q-less_oos} summarized the above.
\IncMargin{1em}
\begin{algorithm}
    \DontPrintSemicolon
    \SetAlgoLined
    \SetKwComment{tcp}{//}{}
    \SetKwInOut{Input}{Input}\SetKwInOut{Output}{Output}
    \Input{Dictionary $\set D = \{\ivect{b}{1},\ldots,\ivect{b}{s}\}$ (see Eq.~\ref{eq:dict}) and  $s\times n$ matrix $\bar R$, which are the outputs of Algorithm~\ref{alg: q-less},
     and vector $\vect v\in\Rn{m}$.}
\Output{$\map F_s(\vect v)$ as defined by Eq.~\ref{eq:compact_F_s}.}
    \BlankLine
    Initialization: set $\vect f\in\Rn{s}$ to be vector of all-zeros, except of the first coordinate, $\ventry{f}{1} = (\mentry{\bar R}{1}{\mapi(1)})^{-1}(\ivect{b}{1})^\ast\cdot\vect v$.\\
    \For{ $j=2:s$}
    {
        set the $j$-th coordinate of $\vect f$ to be $\ventry{f}{j} = (\mentry{\bar R}{j}{\mapi(j)})^{-1}(\ivect{b}{j})^\ast\cdot\vect v - \sum_{i=1}^{j-1}\mentry{\bar R}{i}{\mapi(j)} \ventry{f}{i}$\\
    }
    set $\map F_s(\vect v)=\vect f$
    \caption{Out-of-sample extension for incomplete pivoted Q-less QR decomposition}
    \label{alg: q-less_oos}
\end{algorithm}
\DecMargin{1em}

\subsubsection{Anomaly detection}\label{sec:an_det}

The $2\mu$-embedding subspace $\set S\subset \Rn{m}$ satisfies $\bmu(\vect a)\leq\mu$ for any $\vect a\in \set A$, where the distortion rate function  $\boldsymbol\mu:\Rn{m}\to\Rn{}$ is defined to be $\bmu(\vect x)\triangleq \norm{\perproj{\set S}(\vect x)}$ (see discussion in Section~\ref{sec:genappch}). Once the out-of-sample extension of $\map F_s(\vect x)$ was computed by Algorithm~\ref{alg: q-less_oos}, the distortion rate of $\vect x\in\Rn{m}$ can be easily calculated by $\boldsymbol\mu^2(\vect x) = \norm{\vect x} ^2-\norm{\proj{\set S}(\vect x)} ^2 = \norm{\vect x} ^2-\norm{\map O_{\set S} \proj{\set S}(\vect x)} ^2 = \norm{\vect x} ^2-\norm{\map F_s(\vect x)} ^2$. The first equality is due to the fact that $\proj{\set S}$ is orthogonal projection. The second is due to Step~2 in Section~\ref{sec:genappch} that suggests that $\norminline{\map O_{\set S}\circ\proj{\set S}(\vect x)}  = \norminline{\proj{\set S}(\vect x)} $ for any $\vect x\in\Rn{m}$ and the last equality is due to Eq.~\ref{eq:compact_F_s}. Consequently, we rephrase the distortion rate function to be
\begin{equation}
\label{eq:dist_func}
\bmu(\vect x)\triangleq (\norm{\vect x} ^2-\norm{\map F_s(\vect x)} ^2)^{1/2}.
\end{equation}

Normality of data points by definition \ref{def: normality} will serve us in two main forms in the rest of the paper.

\begin{definition}[$\kappa$-normality]\label{def: normality}
Let $\map F_s:\set A\to\Rn{m}$ be an $s$-dimensional embedding of $\set A\subset\Rn{m}$, computed by Algorithm~\ref{alg: q-less}, and let $\map F_s(\vect x)\in\Rn{s}$ be its extension to $\vect x\in\Rn{m}$, produced by Algorithm~\ref{alg: q-less_oos}. Then, $\vect x$ is classified as a $\kappa$-normal point relatively to $\set A$ if $\bmu(\vect x)\leq\kappa$. Otherwise, it is classified as $\kappa$-abnormal.
\end{definition}
Due to Definition~\ref{def: normality}, all the data points in $\set A$ are $\mu$-normal. Nevertheless, $\mu$ is not necessarily the minimal $\kappa$ for which $\set A$ is classified as $\kappa$-normal, as the most strict $\kappa$ for which $\set A$ is still $\kappa$-normal, is $\kappa=\mu_{strict}$, where
$
\mu_{strict}\triangleq\sup_{\vect a\in\set A}\bmu(\vect a).
$

We conclude this section with a definition of two variants of $\kappa$-normality as was defined in Definition~\ref{def: normality}.
\begin{definition}[Normality and strict normality]\label{def: normality2}
Let $\map F_s:\set A\to\Rn{m}$ be an $s$-dimensional embedding of $\set A\subset\Rn{m}$ computed by Algorithm~\ref{alg: q-less}, with distortion parameter $\mu$, and let $\map F_s(\vect x)\in\Rn{s}$ be its extension to $\vect x\in\Rn{m}$ produced by Algorithm~\ref{alg: q-less_oos}. Then, $\vect x$ is classified as a normal data point relatively to $\set A$ if $\bmu(\vect x)\leq\mu$, and as a strictly normal point if $\bmu(\vect x)\leq\mu_{strict}$. Otherwise, it is classified as a (strictly) abnormal.
\end{definition}
Obviously, all the data points in $\set A$ are strictly-normal and any strictly normal data point is also a normal data  point.

\section{QR-based Diffusion Maps}\label{sec:qrdm}
Although the QR-based dimensionality reduction method, which was presented in Section~\ref{sec:qrdr}, is designated for parametric data analysis, this section presents a utilization of our method for the Diffusion Maps (DM)~\cite{coifman:DM2006}, which is a graph Laplacian based method for analysis of nonparametric data, via exploration of a Markov chain defined on the data. It is mainly utilized for clustering and manifold learning. Typically, application of DM involves a kernel PCA, which is computationally prohibitive for large amount of data.

\subsection{DM framework: overview}

\subsubsection{Diffusion geometry} \label{subsec: dm-geometry}
Let $\set X=\{x_1,\ldots,x_n\}$ be a dataset and let $k:\set X\times \set X\to\Rn{}$ be a symmetric point-wise positive kernel that defines a connected undirected weighted graph over $\set X$. Then, a Markov process over $\set X$ can be defined using $n\times n$ row-stochastic transition probabilities matrix
\begin{equation}
\label{eq:P}
P = D^{-1}K,
\end{equation}
where $\mentry{K}{i}{j}=k(x_i,x_j),~i,j\in[n]$ and $D$ is a diagonal matrix with the diagonal elements  $\mentry{D}{i}{i} = \ventry{d}{i}$, where
$
\ventry{d}{i}\triangleq\sum_{j=1}^n \mentry{K}{i}{j},\quad i\in[n].
$
The vector $\vect d\in\Rn{n}$ is referred to as the \emph{degrees function} or \emph{degrees vector} of the graph. The associated time-homogeneous Markov chain is defined as follows: for any two time points $t,t_0\in \mathbbm N$, $\mathbbm{P}(x(t+t_0) = x_j | x(t_0)=x_i) = \mentry{(P^t)}{i}{j}.$ Assuming that the defined Markov chain is aperiodic (for example,  if there is $x\in\set X$, for which $k(x,x)>0$), then it has a unique stationary distribution $ {\vect {\hat d}}\in\Rn{n}$ which is the steady state of the process, i.e. ${\ventry{\hat d}{j}} = \lim_{t\to\infty}\mentry{(P^t)}{i}{j}$, regardless the initial point $x_i$. This steady state is the probability distribution resulted from $\ell_1$ normalization of the degrees function $\vect d$, i.e.,
\begin{equation}
\label{eq:d_hat}
\vect{\hat d} = \vect d/\norminline{\vect d}_{1}.
\end{equation}
The \emph{diffusion distance} in time $t\in\mathbbm N$ is defined by the metric $\timel{\map D}{t}:\set X\times \set X\to\Rn{}$ such that
\begin{equation}
\label{eq:DM}
\timel{\map D}{t}(x_i,x_j)\triangleq \norm{\rows{(P^t)}{i}-\rows{(P^t)}{j}}_{\ell^2(\vect{\hat d}^{-1})},\quad i,j\in[n].
\end{equation}
By definition, $\rows{(P^t)}{i}$ is the probability distribution over $\set X$ after $t$ time steps, where the initial state is $x_i$. Therefore, the diffusion distance from Eq.~\ref{eq:DM} measures the difference between two propagations along $t$ time steps, one originated in $x_i$ and the other in $x_j$. Weighing the metric by the inverse of the steady state results in ascribing high weight for similar probabilities on rare states and vice versa.

Due to the above interpretation, the diffusion distances are naturally utilized  for multiscale clustering since they uncover the connectivity properties of the graph across time. In~\cite{berard, coifman:DM2006}, it was proved that under some conditions, if $\set{X}$ is sampled from a low intrinsic dimensional manifold then, as $n$ tends to infinity, the Markov chain converges to a diffusion process over that manifold.

\subsubsection{Diffusion maps - low rank representation of the diffusion geometry}
\label{subsubsec: svdBasedDM}

Diffusion maps~\cite{coifman:DM2006} are a family of Euclidean representations of the diffusion geometry of $\set X$ in different time steps, where the Euclidean distances approximate the diffusion distances in Eq.~\ref{eq:DM}. Let $\timel{G}{t}$ be the $n\times n$ matrix defined by
\begin{equation}
\label{eq:G}
\timel{G}{t}\triangleq \norm{\vect d}_{1}^{1/2}D^{-1/2}(P^\ast)^t,~t\in\mathbb N.
\end{equation}
Then, due to Eqs.~\ref{eq:P}-\ref{eq:DM}, the Euclidean $n$-dimensional geometry of  $\timel{G}{t}$'s column is isomorphic to the diffusion geometry  of the associated data points i.e.,
\begin{equation}
\label{eq:DM_G}
\timel{\map D}{t}(x_i,x_j) = \norm{\cols{\timel{G}{t}}{i}-\cols{\timel{G}{t}}{j}} ,\quad i,j\in[n].
\end{equation}
Although embedding of the dataset $\set X$ in $\Rn{n}$ by the columns of $\timel{G}{t}$ preserves the diffusion geometry,  it may be ineffective for large $n$  as was explained in Section~\ref{sec:intro}. Therefore, a dimensionality reduction is required.

Since the transition probabilities matrix $P$ (Eq.~\ref{eq:P}) is conjugated to the symmetric matrix
$
M\triangleq D^{-1/2}KD^{-1/2}
$
via the relation $P = D^{-1/2}MD^{1/2}$, $P$ has a complete real eigen-system. Moreover, due to Gershgorin's circle theorem~\cite{golub} and the fact that $P$ is stochastic, all its eigenvalues are lying in the interval $(-1,1]$ (the exclusion of $-1$ is due to the assumption that the chain is aperiodic). Let
\begin{equation}
\label{eq:Msvd}
M=USU^\ast
\end{equation}
be the eigen-decomposition of $M$, where $U$ is an orthogonal $n\times n$ matrix and $S$ is a diagonal $n\times n$  matrix, whose diagonal elements $s_i$ are ordered decreasingly due to their modulus $1=s_1>\abs{s_2}\geq\ldots\geq \abs{s_{n}}\geq 0$. The first inequality is due to the assumption that the graph is connected. Thus, following Eq.~\ref{eq:G}
\begin{equation}
\label{eq:Gt}
\timel{G}{t} = (M^\ast)^t D^{-1/2} = US^tU^\ast D^{-1/2}.
\end{equation}
Therefore, due to the orthogonality of $U$ and according to Eq.~\ref{eq:DM_G} $$\timel{\map D}{t}(x_i,x_j) = \norm{\vect d}_{1}^{1/2}\norm{S^tU^\ast D^{-1/2}(\ivect {e}{i}-\ivect {e}{j})},\quad i,j,\in[n],$$
where $\ivect{e}{i}$ denotes the $i$-th standard unit vector in $\Rn{n}$. As a consequence, the diffusion maps $\timel{\map\Psi}{t}:\set X\to\Rn{n}$ are defined as follows:
\begin{eqnarray}\label{eq:svd_psi}
\timel{\map \Psi}{t}(x_i) &\triangleq & \norm{\vect d}_1^{1/2}U^\ast \timel{G}{t}\ivect{e}{i}\\
& = & \norm{\mathbf{d}}_{1}^{1/2} S^tU^\ast D^{-1/2}\ivect{e}{i}\nonumber \\
& = & \norm{\mathbf{d}}_{1}^{1/2}\ventry{d}{i}^{-1/2}S^t(\rows{U}{i})^\ast\nonumber \\
& = & \ventry{\hat d}{i}^{-1/2}[s_1^t \mentry{U}{i}{1},\ldots,s_n^t \mentry{U}{i}{n}]^\ast\nonumber .
\end{eqnarray}
Of course, the diffusion maps provide the required embedding of $\set X$ in $\Rn{n}$, since their Euclidean geometry in $\Rn{n}$ is identical to the diffusion geometry of the dataset $\set X$, i.e. $\timel{\map D}{t}(x_i,x_j) =\norm{\timel{\map\Psi}{t}(x_i)-\timel{\map\Psi}{t}(x_j)},~i,j\in[n].$ In order to achieve a low-dimensional embedding, the diffusion map from Eq.~\ref{eq:svd_psi} is projected onto its significant principal components according to the decay rate of the spectrum of $M^t$. Specifically, for a sufficiently small $\abs{s_{k+1}}^t$, the $k$-dimensional embedding is $\map{T}_k\circ\map\Psi_t$, where $\map{T}_k:\Rn{n}\to\Rn{k}$ is the projection on the first $k$ coordinates. Lemma~\ref{lem:svd_distort} quantifies the distortion resulted by such a projection.
\begin{lemma}\label{lem:svd_distort}
Let $c=\max_{i\in[n]}\ventry{\hat d}{i}^{-1/2}$ (see Eq.~\ref{eq:d_hat}). Then the $k$-dimensional embedding $\map{T}_k\circ\timel{\map\Psi}{t}$ is a $\sqrt{2}c\abs{s_{k+1}}^{t}$-distortion of the $n$-dimensional diffusion map $\timel{\map\Psi}{t}$ from Eq.~\ref{eq:svd_psi}.
\end{lemma}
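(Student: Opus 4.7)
The plan is to exploit the fact that $\map T_k$ is the orthogonal projection of $\Rn{n}$ onto the subspace spanned by the first $k$ coordinates, so that the distortion can be controlled by a single residual quantity. For each $i\in[n]$ write $\timel{\map\Psi}{t}(x_i)=\map T_k\circ\timel{\map\Psi}{t}(x_i)+\vect r_i$, where, reading off Eq.~\ref{eq:svd_psi}, the residual $\vect r_i\in\Rn{n}$ has $\ell$-th coordinate equal to $\ventry{\hat d}{i}^{-1/2}s_\ell^t\mentry{U}{i}{\ell}$ for $\ell>k$ and $0$ for $\ell\leq k$. Because $\map T_k\circ\timel{\map\Psi}{t}(x_i)$ and $\vect r_i$ lie in orthogonal complementary subspaces of $\Rn{n}$, Pythagoras yields $\|\timel{\map\Psi}{t}(x_i)-\timel{\map\Psi}{t}(x_j)\|^2=\|\map T_k\timel{\map\Psi}{t}(x_i)-\map T_k\timel{\map\Psi}{t}(x_j)\|^2+\|\vect r_i-\vect r_j\|^2$, and the elementary inequality $\sqrt{a^2+b^2}-a\leq b$ (for $a,b\geq 0$) then gives $\bigl|\,\|\timel{\map\Psi}{t}(x_i)-\timel{\map\Psi}{t}(x_j)\|-\|\map T_k\timel{\map\Psi}{t}(x_i)-\map T_k\timel{\map\Psi}{t}(x_j)\|\,\bigr|\leq\|\vect r_i-\vect r_j\|$. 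It therefore suffices to show $\|\vect r_i-\vect r_j\|\leq\sqrt{2}\,c\,|s_{k+1}|^t$ uniformly in $i,j$; the case $i=j$ is trivial since both sides vanish.

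To prove this residual bound, first use $|s_\ell|\leq|s_{k+1}|$ for all $\ell\geq k+1$ to factor out the geometric decay, and then enlarge the sum to all of $\ell\in[n]$ (zero-padding the first $k$ entries), obtaining $\|\vect r_i-\vect r_j\|^2\leq|s_{k+1}|^{2t}\,\bigl\|\ventry{\hat d}{i}^{-1/2}\rows{U}{i}-\ventry{\hat d}{j}^{-1/2}\rows{U}{j}\bigr\|^2$. The decisive observation is that $U$ is a square orthogonal matrix (coming from the eigen-decomposition in Eq.~\ref{eq:Msvd}), so its rows are orthonormal: $\|\rows{U}{i}\|=1$ for every $i$ and $\langle\rows{U}{i},\rows{U}{j}\rangle=0$ whenever $i\neq j$. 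Expanding the squared norm, the cross term thus vanishes and the right-hand side reduces to $|s_{k+1}|^{2t}(\ventry{\hat d}{i}^{-1}+\ventry{\hat d}{j}^{-1})\leq 2c^2|s_{k+1}|^{2t}$ by the definition of $c$, and taking square roots finishes the argument.

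The only non-routine ingredient is the $\sqrt{2}$ (as opposed to $2$) in the constant. A careless approach would bound each $\|\vect r_i\|\leq c|s_{k+1}|^t$ on its own (using that the rows of $U$ are unit vectors) and then invoke the triangle inequality $\|\vect r_i-\vect r_j\|\leq\|\vect r_i\|+\|\vect r_j\|$, which only yields the weaker bound $2c|s_{k+1}|^t$. The improvement is earned by keeping the two residuals together inside a single squared norm and letting the orthogonality $\langle\rows{U}{i},\rows{U}{j}\rangle=0$ cancel the cross term $-2\ventry{\hat d}{i}^{-1/2}\ventry{\hat d}{j}^{-1/2}\langle\rows{U}{i},\rows{U}{j}\rangle$. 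This, rather than any spectral estimate, is the essential step I expect to require care.
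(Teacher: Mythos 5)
Your proof is correct, and it actually takes a sharper route than the paper's own argument. The paper first applies the reverse triangle inequality to reduce the problem to bounding $\norm{\vect r_i}+\norm{\vect r_j}$, where $\vect r_i = \timel{\map\Psi}{t}(x_i)-\map T_k\circ\timel{\map\Psi}{t}(x_i)$, and then asserts that this sum is at most $\abs{s_{k+1}}^t(\ventry{\hat d}{i}^{-1}+\ventry{\hat d}{j}^{-1})^{1/2}$. That intermediate step does not follow: the individual bounds $\norm{\vect r_i}\leq \ventry{\hat d}{i}^{-1/2}\abs{s_{k+1}}^t$ only give $\norm{\vect r_i}+\norm{\vect r_j}\leq\abs{s_{k+1}}^t(\ventry{\hat d}{i}^{-1/2}+\ventry{\hat d}{j}^{-1/2})$, which can be as large as $2c\abs{s_{k+1}}^t$ (e.g.\ when rows $i$ and $j$ of $U$ concentrate their mass on tail coordinates $k+1$ and $k+2$ with $s_{k+1}=s_{k+2}$), whereas the quantity $(\ventry{\hat d}{i}^{-1}+\ventry{\hat d}{j}^{-1})^{1/2}\abs{s_{k+1}}^t$ is only $\sqrt{2}c\abs{s_{k+1}}^t$. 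Your version repairs exactly this: by keeping $\norm{\vect r_i-\vect r_j}$ intact, invoking Pythagoras for the projection $\map T_k$, and using the orthonormality of the rows of $U$ to cancel the cross term, you legitimately obtain the constant $\sqrt{2}c$ that the lemma states. The side remark in your last paragraph correctly diagnoses why the triangle-inequality shortcut cannot reach $\sqrt{2}c$ — which is precisely the shortcut the paper's written proof takes. In short, both arguments aim at the same residual bound, but yours is the one that actually delivers the claimed constant.
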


\begin{proof}
Following Eq.~\ref{eq:svd_psi} we get\footnote{Here, for comparison purposes, we use the convention that $\map T_k:\Rn{n}\to\Rn{n}$, is the operator that zeros out the last $n-k$ coordinates.}
\begin{eqnarray*}
\abs{\norm{\timel{\map\Psi}{t}(x_i)-\timel{\map\Psi}{t}(x_j)} - \norm{\map{T}_k\circ\timel{\map\Psi}{t}(x_i)-\map{T}_k\circ\timel{\map\Psi}{t}(x_j)}} &\leq &\norm{\timel{\map\Psi}{t}(x_i) - \map T_k\circ\timel{\map\Psi}{t}(x_i)}\\
& + &\norm{\timel{\map\Psi}{t}(x_j) - \map T_k\circ\timel{\map\Psi}{t}(x_j)}\\
& \leq & \abs{\ventry{s}{k+1}}^t(\ventry{\hat d}{i}^{-1}+\ventry{\hat d}{j}^{-1})^{1/2}\\
&\leq &\sqrt{2}c\abs{\ventry{s}{k+1}}^t.
\end{eqnarray*}
\end{proof}

The distortion bound from Lemma~\ref{lem:svd_distort} is referred to as the \emph{analytic bound}. In many cases, spectral properties of the utilized kernel are known a-priori with no need for its explicit computation. The Gaussian kernel is just one example (see~\cite{Bermanis201315, Bermanis2014302}) but not the only. In such cases, only a partial SVD can be calculated to produce the relevant principal components according to the required distortion.

\subsection{Efficient ICPQR-based DM framework for data analysis}
\label{subsec:QRBasedDM}

In this section, we provide a QR-based framework for low-dimensional representation of the DM for a training set $\set X$, its out-of-sample extension and anomaly detection. For this purpose, $\set X$ is assumed to be a subset of $\bar{\set X}$, on which a symmetric point-wise positive kernel $k: \bar{\set X}\times \bar{\set X}\to\Rn{}$ is defined.

\subsubsection{QR-based low-dimensional embedding}\label{sec:DMembed}
Equation~\ref{eq:DM_G} suggests that the diffusion geometry is already embodied in the Euclidean geometry of $\timel{G}{t}$'s columns $\timel{\set G}{t}\triangleq\{\cols{(\timel{G}{t})}{1},\ldots,\cols{(\timel{G}{t})}{n}\}$ (see Eq.~\ref{eq:G}).  According to Proposition~\ref{prop:main} and Eq.~\ref{eq:DM_G}, application of Algorithm~\ref{alg: q-less} to $\timel{G}{t}$ with distortion rate $\mu>0$ produces an $s$-dimensional $2\mu$-distortion $\map F_s: \timel{\set G}{t}\to\Rn{s}$, for which $$\max_{i,j\in[n]}\abs{\timel{\map D}{t}(x_i,x_j)-\norm{\timel{\map h}{t}_s(x_i)- \timel{\map h}{t}_s(x_j)} }\leq 2\mu,$$ where $\timel{\map h}{t}_s:\set X\to\Rn{s}$ is defined by
\begin{equation}
\label{eq:h_s}
\timel{\map h}{t}_s(x_i) \triangleq \map F_s(\cols{(\timel{G}{t})}{i}),\quad i\in[n].
\end{equation}
As was discussed in Section~\ref{sec:PIQR}, the embedding dimension $s$ is not known a-priori and is a non-increasing function of $\mu$.  Algorithm~\ref{alg: qrdm} summarizes the above.
\IncMargin{1em}
\begin{algorithm}
    \DontPrintSemicolon
    \SetAlgoLined
    \SetKwComment{tcp}{//}{}
    \SetKwInOut{Input}{Input}\SetKwInOut{Output}{Output}
    \Input{An $n\times n$ kernel matrix $K$, time step $t\in\mathbbm{N}$ and a nonnegative distortion parameter $\mu$.}
\Output{An $s$-dimensional $2\mu$-distortion $\timel{\map h}{t}_s:\set X\to \Rn{s}$ of $\timel{\map  \Psi}{t}$, a dictionary $\set D\subset\timel{\set G}{t}$ of $s$ elements, an $s\times n$ matrix $\bar{R}$ and a degrees vector $\vect d\in\Rn{n}$.}
    \BlankLine
    set $\vect d\in\Rn{n}$, $\ventry{d}{i} = \sum_{j=1}^n \mentry{K}{i}{j}$\\
    set the $n\times n$ diagonal matrix $D$, whose $i$-th diagonal element is $\ventry{d}{i}$ \\
    set the $n\times n$ row stochastic transition probabilities matrix in time $t$, $P^t=(D^{-1}K)^t$\\
    set $\timel{G}{t} = \norm{\vect d}_{1}^{1/2}D^{-1/2}(P^t)^\ast$ (see Eq.~\ref{eq:G})\\
    apply Algorithm~\ref{alg: q-less} to $\timel{G}{t}$ and $\mu$ to get a permutation $\mapi:[n]\to[n]$ and an $s\times n$ matrix $\bar R$\\
    define $\set D = \{\cols{(\timel{G}{t})}{\mapi(1)},\ldots,\cols{(\timel{G}{t})}{\mapi(s)}\}$ and $\timel{\map h}{t}_s(x_i) = \cols{\bar R}{i},~i\in[n]$. \label{idm: last_step}
    \caption{ICPQR-based DM}
    \label{alg: qrdm}
\end{algorithm}
\DecMargin{1em}

The output parameters $\bar R$ and $\vect d$ of Algorithm~\ref{alg: qrdm} are needed for the out-of-sample phase, described in Section~\ref{subsubsec: dm_oose}. In addition, due to Definition~\ref{def: normality2}, the anomaly detection in the DM context $\mu_{strict}$ takes the form
\begin{equation}
\label{eq:dm_mu_strict}
\timel{\mu}{t}_{strict} = \sup_{i\in[n]}\norm{\cols{(\timel{G}{t})}{i}-\timel{\map h}{t}_s(x_i)}.
\end{equation}

\subsubsection{Out-of-sample extension and anomaly detection}
\label{subsubsec: dm_oose}
Given an out-of-sample data point $x\in\bar{\set X} \backslash\set X$, the goal of the present section is to extend $\timel{\map h}{t}_s$ from Eq.~\ref{eq:h_s} to $x$.  For this purpose, a user-defined probabilities vector $\timel{\map p}{t}(x)\in\Rn{n}$ has to be defined. The $i$-th entry of $\timel{\map p}{t}(x)$ defines the transition probabilities from $x$ to $x_i\in\set X$ in $t$ time steps, i.e. $ {\timel{\map p}{t}(x)}_{i} = \mathbbm P(x(t)= x_i | x(0) = x),~i\in[n]$. Consequently, consistently with Eq.~\ref{eq:G}, the $n$ dimensional extension of $\timel{\set G}{t}$ to $x$ is defined by $\timel{\vect g}{t}(x) \triangleq \norm{\vect d}_{1}^{1/2}D^{-1/2}\timel{\map p}{t}(x)\in\Rn{n}$. Then, Algorithm~\ref{alg: q-less_oos} is applied to $\timel{\vect  g}{t}(x)$ to produce an $s$-dimensional embedding $\timel{\map h}{t}_s(x)=\map F_s(\timel{\vect  g}{t}(x))$.

This scheme is consistent with the low-dimensional embedding scheme, described in Section~\ref{sec:DMembed}, in the sense that if the probabilities vector $\timel{\map p}{t}(x)$ equals to an in-sample probabilities vector $\cols{(P^\ast)}{i}$ for a certain $i\in[n]$, then $\timel{\map h}{t}_s(x) = \timel{\map h}{t}_s(x_i)$. Algorithm~\ref{alg: dm_oose} summarizes the above.

\IncMargin{1em}
\begin{algorithm}
    \DontPrintSemicolon
    \SetAlgoLined
    \SetKwComment{tcp}{//}{}
    \SetKwInOut{Input}{Input}\SetKwInOut{Output}{Output}
    \Input{A dictionary $\set D$, an $s\times n$ matrix $\bar R$ and a degrees vector $\vect d\in\Rn{n}$,  which are the outputs from Algorithm~\ref{alg: qrdm} and a transition probabilities vector $\timel{\map p}{t}(x)\in\Rn{n}$.}
\Output{The extension of $\timel{\map h}{t}_s$ to $x$, ${\map h}_s^{(t)}(x)$ and the associated distortion rate $\bmu(x)$.}
    \BlankLine
    set $\timel{\map g}{t}(x)= \norm{\vect d}_{1}^{1/2}D^{-1/2}\timel{\map p}{t}(x)$, where $D$ is the diagonal $n\times n$ matrix $diag(\vect d)$\\
    apply Algorithm~\ref{alg: q-less_oos} to $\set D$, $\bar R$ and $\timel{\map g}{t}(x)$ to get ${\map h}_s^{(t)}(x)\in\Rn{s}$\\
    define $\bmu(x) = (\norminline{\timel{\map g}{t}(x)} ^2-\norminline{\timel{\map h}{t}_s(x)} ^2)^{1/2}$\label{stp:dist_rate}\\
    \caption{Out-of-sample extension for ICPQR-based DM}
    \label{alg: dm_oose}
\end{algorithm}
\DecMargin{1em}

One possibility for the definition of $\timel{\map p}{1}(x)$, which is the first time step transfer probabilities from an out-of-sample data point to $\set X$, is via the kernel function $k$ by
\begin{equation}\label{eq:probs_vect}
\timel{\map p}{1}(x)_{(i)} \triangleq k(x,x_i)/\sum_{j=1}^n k(x,x_j),~i\in[n].
\end{equation}
This definition is consistent with Eq.~\ref{eq:P}. Then, the corresponding transition probabilities vector in time step $t$ can be heuristically defined by $\timel{\map p}{t}=(P^\ast)^{t-1}\timel{\map p}{1}(x)$. This definition represents a Markovian process for which the new data point $x$ is inaccessible from the dataset $\set X$, and the transition probabilities from $x$ to $\set X$ in time-step $t$ are determined by the transition probabilities from $x$ to $\set X$ in the first time-step that are represented by $\timel{\map p}{1}(x)$, and the transition probabilities among the elements of $\set X$ after $t-1$ time-steps that are represented by $(P^\ast)^{t-1}$.  Finally, the (strict) normality of an out-of-sample data point $x$ is determined due to Definition~\ref{def: normality2} and Eq.~\ref{eq:dm_mu_strict} by using the distortion rate function $\bmu(x)$ from step~\ref{stp:dist_rate} in Algorithm~\ref{alg: dm_oose}.

\section{Experimental Results}
\label{sec:experimental}
This section analyses three different datasets using the proposed methodologies from Sections~\ref{sec:qrdr} and~\ref{sec:qrdm} that are synthetic and real. Section~\ref{subsec: swiss_roll} exemplifies the basic notions of geometry preservation, anomaly detection and out-of-sample extension through the application of the QR-based DM to a synthetic dataset as described in Section~\ref{sec:qrdm}. A comparison with the method proposed in~\cite{Salhov:uIDM2014} for diffusion geometry preservation is presented in this section as well. A QR-based DM analysis of real data is demonstrated in Section~\ref{subsec:exp_darpa}. The analysis in both of the above examples is based on the corresponding first time step in DM. Finally, Section~\ref{subsec:isolet} presents a multiclass classification of parametric data, using generalizations of the out-of-sample extension and anomaly detection methods, presented in Section~\ref{sec: oose}.

\subsection{QR-based DM analysis - toy example} \label{subsec: swiss_roll}

In this section, we present a diffusion-based analysis of a synthetic two dimensional manifold, immersed in a three dimensional Euclidean space. The analyzed dataset $\set X\subset\Rn{3}$ consists of $n=3,000$ data points, uniformly sampled  from a Swiss roll, shown in Fig.~\ref{fig:swiss_roll}.
\begin{figure}[H]
    \centering
    \includegraphics[width=0.4\textwidth,keepaspectratio]{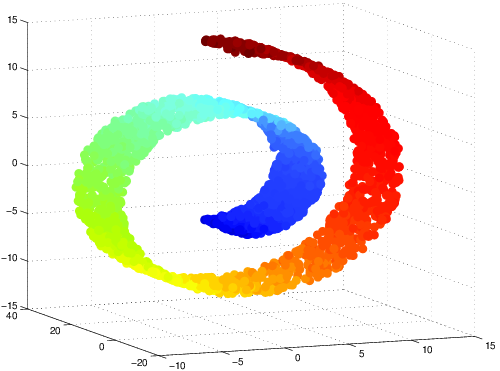}
    \caption{Swiss roll that contains $3,000$ uniformly distributed data points. Data points are colored according to their distance from the origin.}
    \label{fig:swiss_roll}
\end{figure}
The utilized kernel function is the commonly-used Gaussian kernel $k_\ve:\set X\times \set X\to\Rn{}$,
\begin{equation}
\label{eq:Gkernel}
k_\ve(x,y)\triangleq e^{-\norm{x-y} ^2/\ve},\quad \ve>0,
\end{equation}
where the norm in the exponent is the standard three dimensional Euclidean norm. The associated $n\times n$ kernel matrix is $K_\ve$ whose $(i,j)$-th entry is $$
\mentry{(K_\ve)}{i}{j}\triangleq k_\ve(x_i,x_j),\quad i,j\in[n].
$$
The corresponding degrees function is $\vect d_\ve\in\Rn{n}$, whose $i$-th entry is $\ventry{(d_\ve)}{i}\triangleq\sum_{j=1}^n k_\ve(x_i,x_j), ~i\in[n]$ and $D_\ve$ is the $n\times n$ diagonal matrix, whose $i$-th diagonal entry is $\ventry{(d_\ve)}{i}$. Based on these, according to Eq.~\ref{eq:P}, the $n\times n$ transition probabilities matrix is defined by
\begin{equation}
\label{eq:P_ve}
P_\ve\triangleq D_\ve^{-1}K_\ve.
\end{equation}
Thus, transition probabilities between close data points are high and low for far points.

Section~\ref{sec:vevss} addresses the qualitative dependency between the neighborhood parameter $\ve$, and the required embedding's dimensionality. Section~\ref{sssec: lde} shows a further step of dimensionality reduction, using the optimal coordinates system, as described in Section~\ref{sec:PIQR}. Out-of-sample extension and anomaly detection, as described in Section~\ref{subsubsec: dm_oose}, are demonstrated in Section~\ref{subsec:expr_oose_anomaly}. Finally, Section~\ref{subsec:cmp_idm} presents a brief description of the $\mu$-IDM method~\cite{Salhov:uIDM2014} and compares its performances to the proposed  method in this paper.

\subsubsection{The dependency between $\ve$ and the embedding's dimension $s$}\label{sec:vevss}
As was proved in~\cite{Bermanis201315}, as $\ve$ increases, the numerical rank of $P_\ve$ decreases and vice versa. Mathematically, let $1=s^{(\ve)}_1\geq s^{(\ve)}_2\geq\ldots\geq s^{(\ve)}_n\geq 0$ be the eigenvalues\footnote{The eigenvalues of $P_\ve$ are nonnegative since the Gaussian kernel function $k_\ve$ from Eq.~\ref{eq:Gkernel} is positive definite due to Bochner's theorem~\cite{scatter}. Thus, if the data points in $\set X$ are all distinct, then $k_\ve$ is strictly positive definite and the eigenvalues of $P_\ve$ are all positive.} of $P_\ve$. Define $\map E_\ve(r):[0,1]\to[0,1]$, $\map E_\ve(r) \triangleq (\sum_{i=1}^{t}{(s^{(\ve)}_{i})^{2}}/\sum_{i=1}^{n}{(s^{(\ve)}_{i})^{2}})^{1/2}$ to be the energy's portion of $P_\varepsilon$, which is captured by the first $t$ eigenvalues of $P_\ve$, where $r=t/n$ is the corresponding spectrum ratio. Then, as $\ve$ increases, the number of significant eigen-components decreases as demonstrated in Fig.~\ref{fig:dm_decay}. This fact, combined with Lemma~\ref{lem:svd_distort}, suggests that when $\ve$ decreases, the number of components required to achieve a certain distortion increases as seen in Fig.~\ref{fig:sw_embedding_dim}.
\begin{figure}[H]
\centering
\subfigure[Spectra of $P_\varepsilon$]{
\includegraphics[width=0.3\textwidth,keepaspectratio]{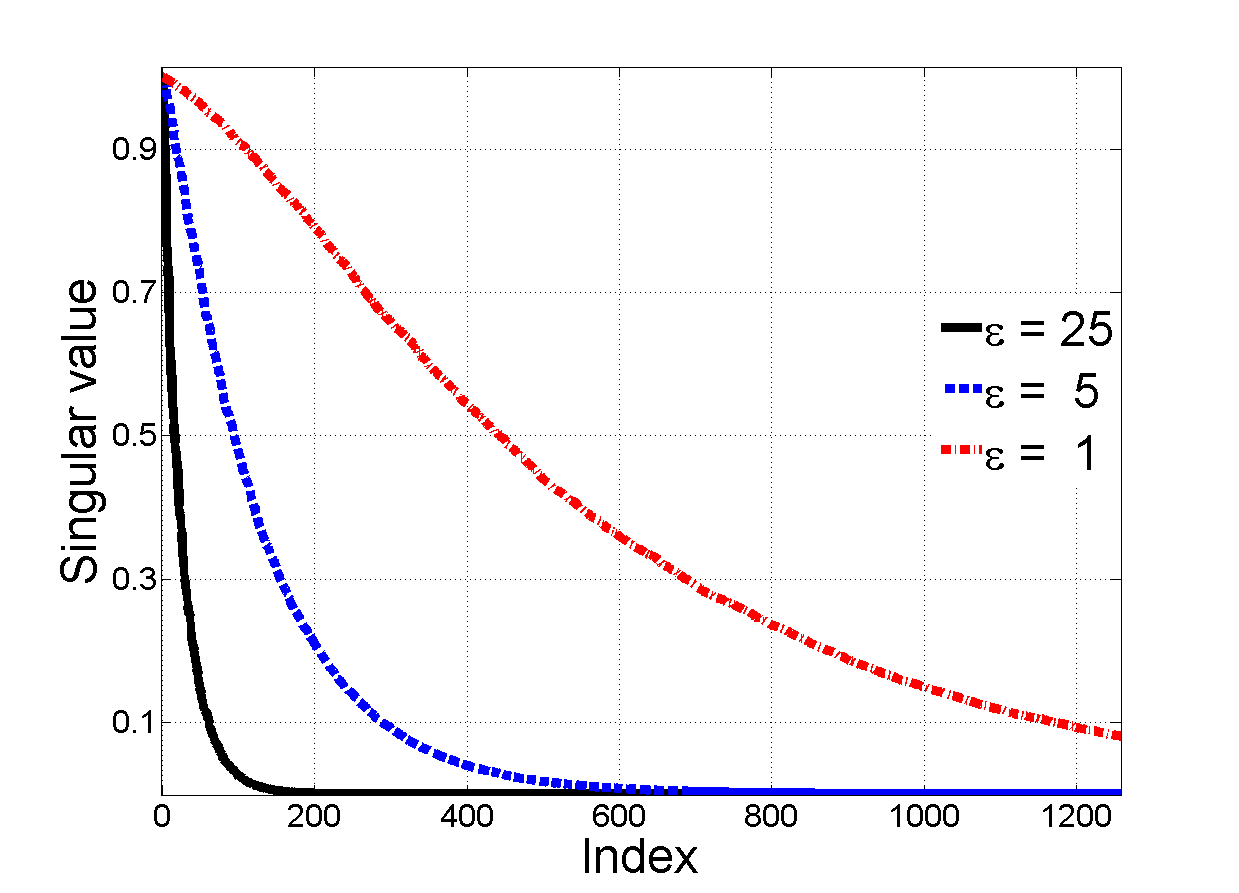}
\label{fig:specs}
}
\subfigure[Spectra ratios of $P_\varepsilon$]{
\includegraphics[width=0.3\textwidth,keepaspectratio]{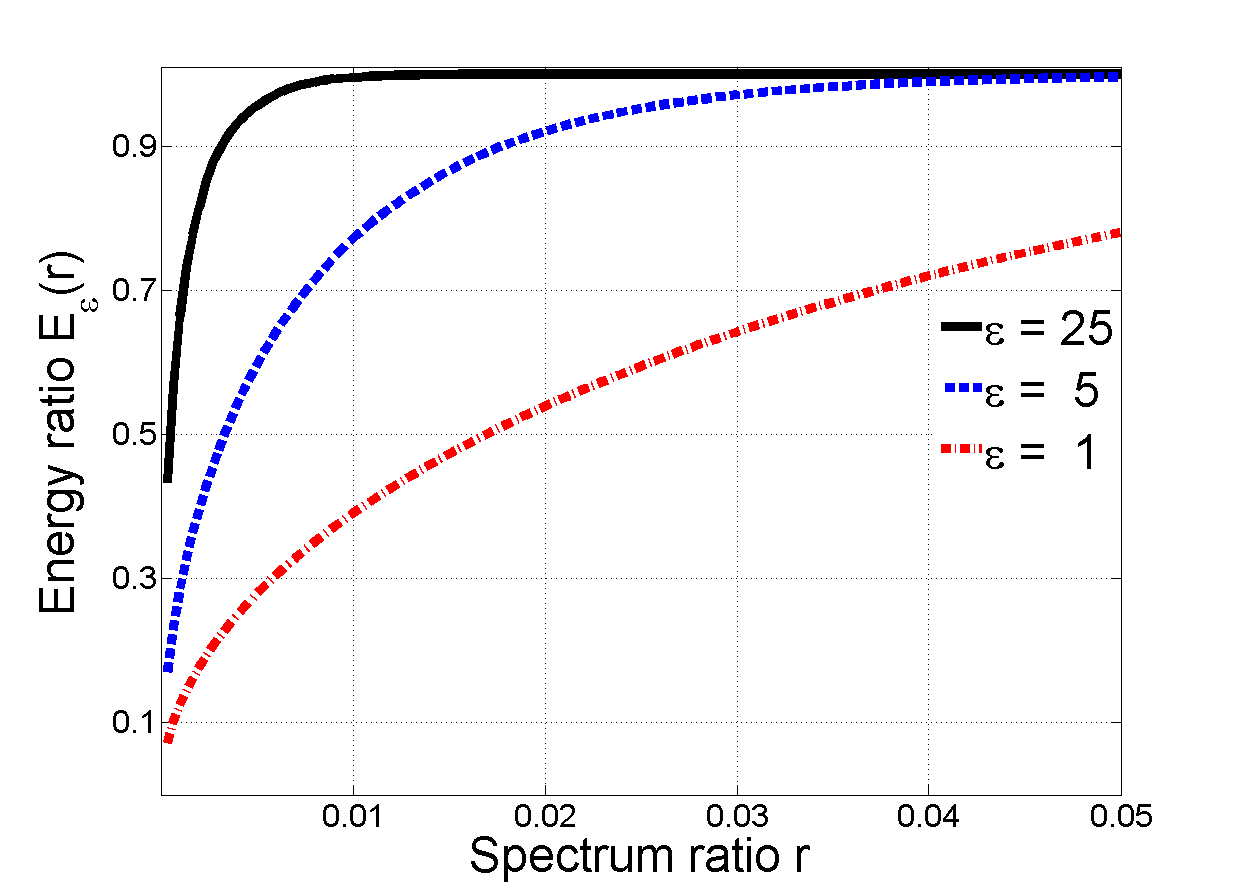}
\label{fig:energy}
}
\subfigure[Diffusion distances distributions]{
\includegraphics[width=0.3\textwidth,keepaspectratio]{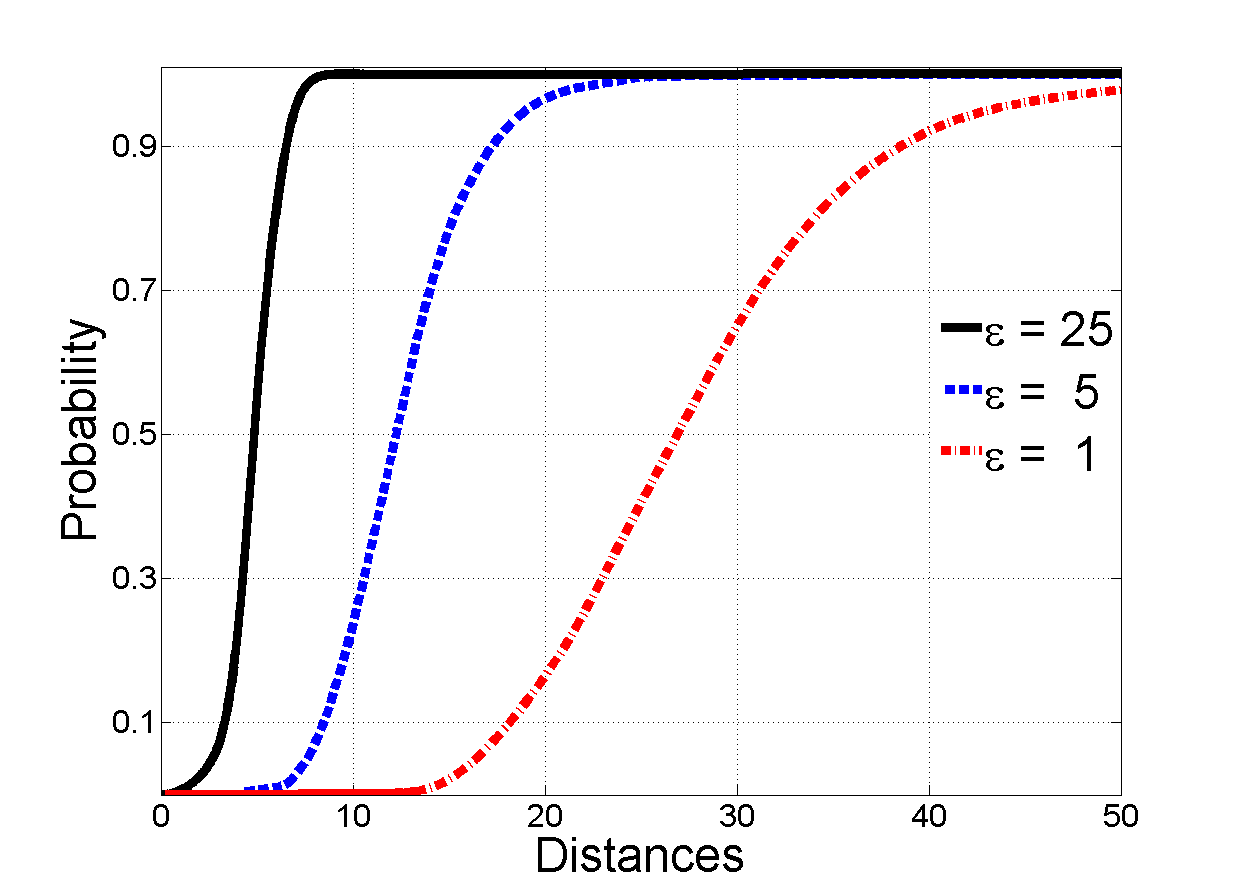}
\label{fig:hist} } \caption{Spectral and geometrical views of three diffusion geometries that correspond to the neighborhood parameters $\varepsilon=1,5$ and $25$. Figure~\subref{fig:specs} shows that as $\varepsilon$ becomes larger the spectrum decays faster. An immediate consequence is shown in Fig.~\subref{fig:energy} that shows the relation between the number of significant components and $\varepsilon$. Figure~\subref{fig:hist} shows the probability distribution of the diffusion distances. It is clear that the use of large $\epsilon$ results in many short diffusion distances and vice-versa.} \label{fig:dm_decay}
\end{figure}

Figure~\ref{fig:sw_embedding_dim} compares between the analytic bound (see Lemma~\ref{lem:svd_distort}), the minimal dimension of DM and the QR-based DM dimension that are required to achieve a certain distortion. It also demonstrates the above discussed relation between the neighborhood parameter $\ve$ and the dimensionality of the embedding. Thus, for a larger $\ve$ a fewer dimensions are required to achieve a certain distortion.
\begin{figure}[H]
\centering
\subfigure[Small neighborhood, $\varepsilon=1$]{
\includegraphics[scale = 0.15]{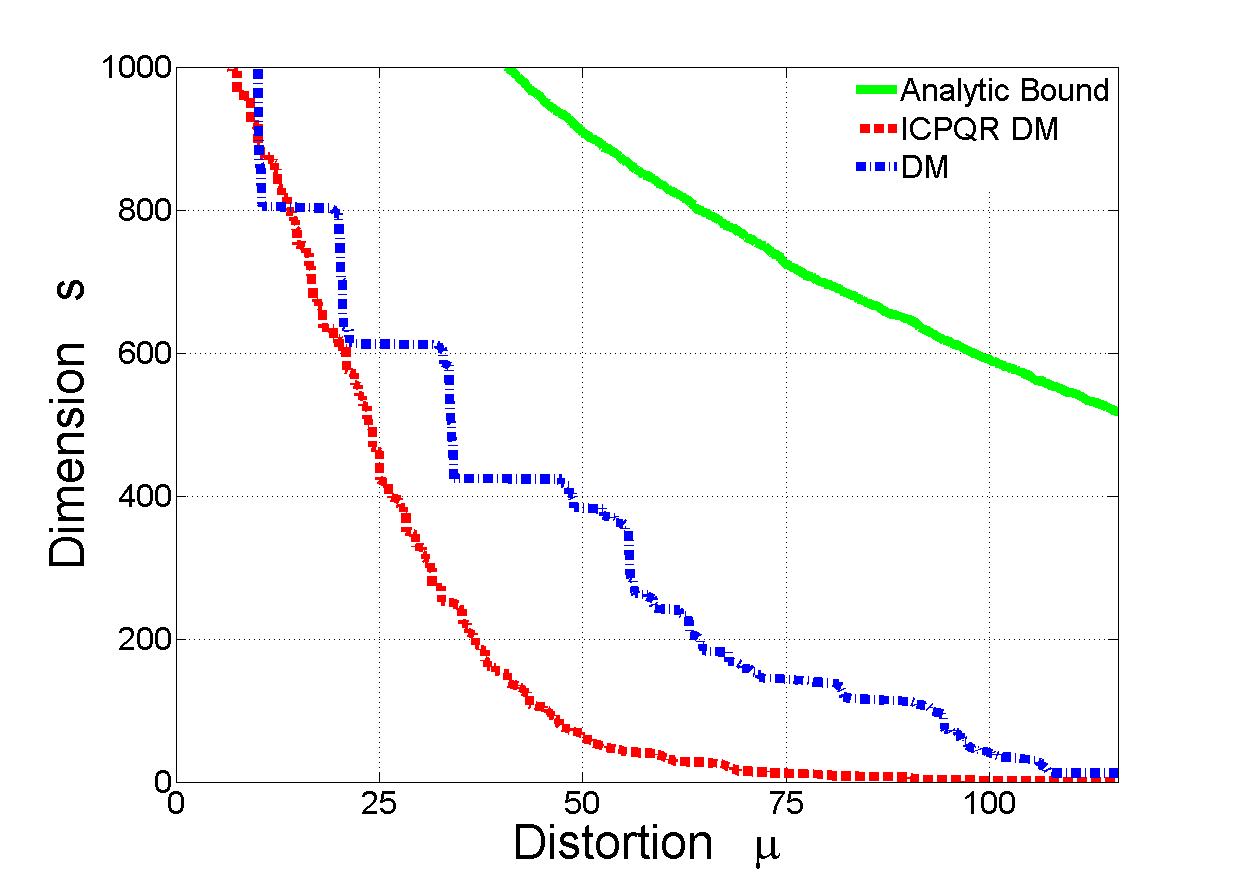}
\label{fig:sw_slow}}
\subfigure[Medium neighborhood, $\varepsilon=5$]{
\includegraphics[scale=.15]{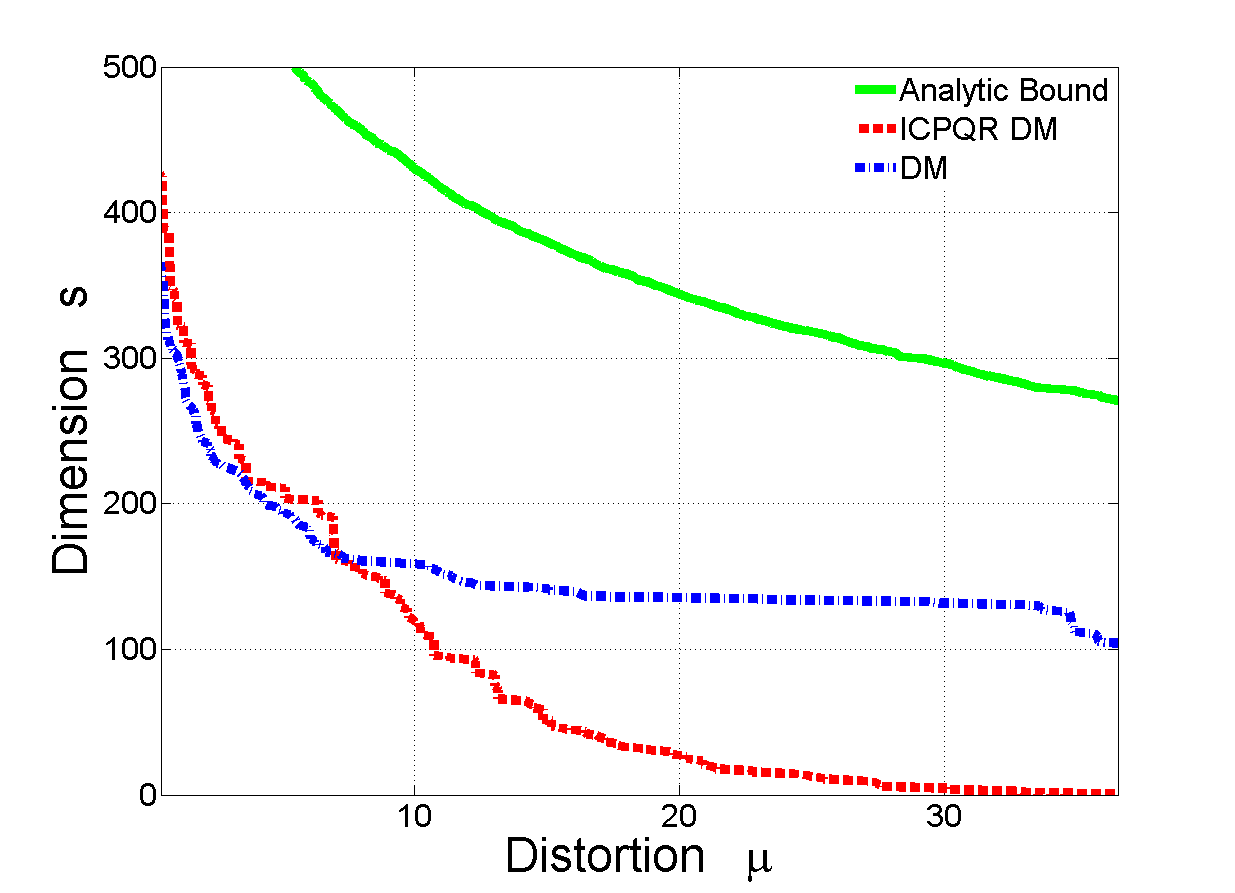}
\label{fig:sw_medium}}
\subfigure[Large neighborhood, $\varepsilon=25$]{
\includegraphics[scale=.15]{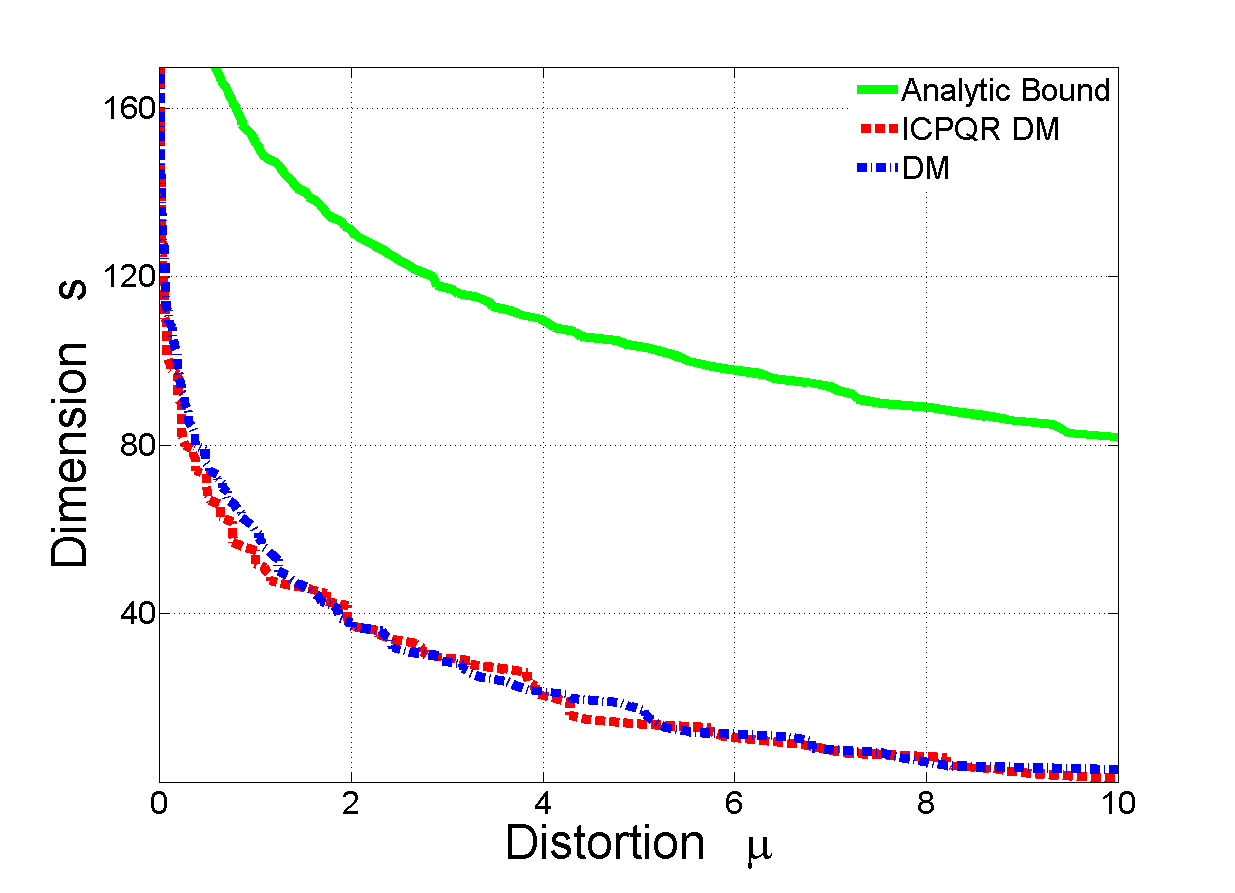}
\label{fig:sw_fast}}
\caption{Number of components (dimensions- $y$-axis) required to preserve the Swiss roll diffusion geometry up to a distortion ($x$-axis) for three different neighborhood sizes. The continuous (green) graph denotes the analytic bound provided by Lemma~\ref{lem:svd_distort}, the dashed (red) graph is the QR-based DM dimension produced by Algorithm~\ref{alg: q-less} and the dash-dotted (blue) graph is the minimal DM dimension required to achieve a certain distortion.} \label{fig:sw_embedding_dim}
\end{figure}

\subsubsection{Low-dimensional embedding}\label{sssec: lde}
In this section, a comparison between the classic DM and ICPQR-based DM is presented. Figure~\ref{fig:icpqr_swiss_roll} shows the two-dimensional DM embedding of $\set X$, and a two-dimensional view of an aligned versions of ICPQR-based DM, applied to $\set X$ with three different distortion values.

\begin{figure}[H]
\centering
\hspace{4pt}
\subfigure[DM]{
\includegraphics[scale=.21]{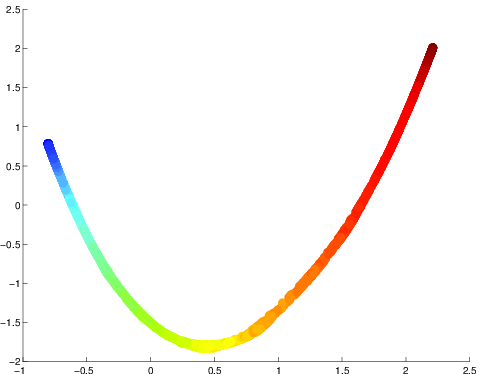}
\label{fig:dm_swiss_roll}}
\subfigure[ICPQR-DM, $\mu = 0.1$]{
\includegraphics[scale=.21]{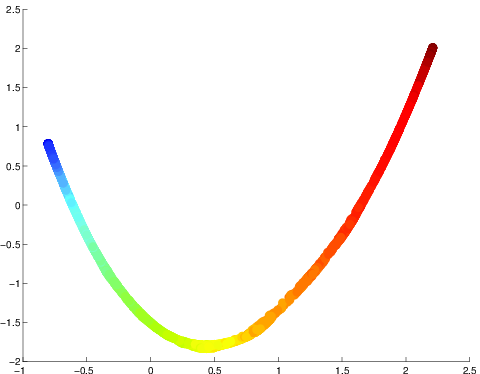}
\label{fig:icpqr_dm_swiss_roll_01}}
\subfigure[ICPQR-DM, $\mu = 1$]{
\includegraphics[scale=.21]{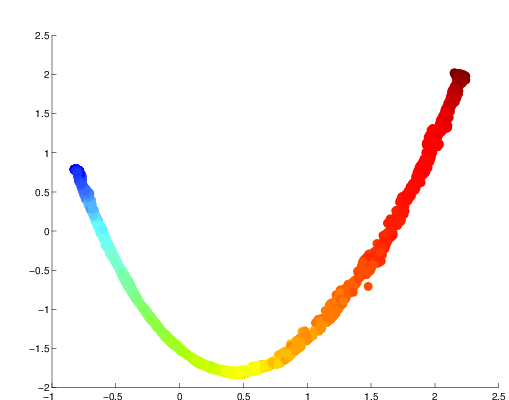}
\label{fig:icpqr_dm_swiss_roll_1}}
\subfigure[ICPQR-DM, $\mu = 5$]{
\includegraphics[scale=.21]{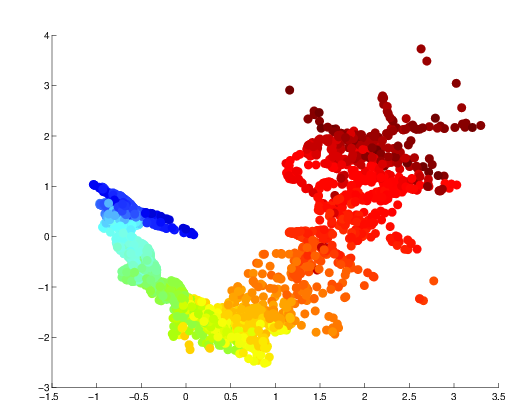}
\label{fig:icpqr_dm_swiss_roll_5}} \caption{Comparison between the two-dimensional DM and the projected aligned ICPQR-based DM of $\set X$ with $\ve=3$, for the first diffusion time-step:~\subref{fig:dm_swiss_roll}~two most significant DM coordinates of $\timel{\map\Psi}{1}$.~\subref{fig:icpqr_dm_swiss_roll_01}-\subref{fig:icpqr_dm_swiss_roll_5}: two most significant coordinates of the aligned version of $\timel{\map h}{1}_s$, with~\subref{fig:icpqr_dm_swiss_roll_01}~$\mu = 0.1$, $s=1,246$, actual distortion $0.01$,~\subref{fig:icpqr_dm_swiss_roll_1}~$\mu = 1$, $s=752$, actual distortion $0.23$, and~\subref{fig:icpqr_dm_swiss_roll_5}~$\mu = 5$, $s=382$, actual distortion $3.91$. Data coloring is consistent with Fig.~\ref{fig:swiss_roll}.} \label{fig:icpqr_swiss_roll}
\end{figure}

It should be stressed that $\timel{\map h}{1}_s$, which is the output of Algorithm~\ref{alg: qrdm}, is an $s$-dimensional $2\mu$-distortion of the $n$-dimensional DM $\timel{\map\Psi}{1}$ as defined in Definition~\ref{def: distortion}. Therefore, it is unlikely that low-dimensional projections (lower than $s$) will represent similar geometries. Nevertheless, to demonstrate the notion of low-rate distortion, Fig.~\ref{fig:icpqr_swiss_roll} shows a two dimensional view of an aligned version of $\timel{\map\Psi}{1}$ with $\timel{\map\Psi}{1}$. For that purpose, the DM of $\set X$ was explicitly computed. The utilized alignment algorithm is described in Appendix~\ref{appx}.

\subsubsection{Out of sample extension and anomaly detection}
\label{subsec:expr_oose_anomaly} Application of Algorithm~\ref{alg: qrdm} to $\set X$ with $K_\ve$, $\ve=3$, $t=1$ and $\mu=0.1$ was resulted in an $s$-dimensional $2\mu$-distortion of $\timel{\map\Psi}{1}$, $\timel{\map h}{1}_s:\set X\to\Rn{s}$ with $s=1,246$. An out-of-sample extension of $\timel{\map h}{1}_s$ to $\bar{\set X}$ is demonstrated in this section, as well as anomaly detection where $\bar{\set X}\subset \Rn{3}$ is a random subset of $10,000$ data points that are uniformly sampled from the three dimensional bounding box of $\set X$.

For that purpose, Algorithm~\ref{alg: dm_oose} was applied to $\bar{\set X}$. Beside its three first inputs, which are provided as outputs from Algorithm~\ref{alg: qrdm}, a transition probabilities vector $\timel{\map p}{1}(x)\in\Rn{n}$ has to be defined for any $x\in\bar{\set X}$. In this example, $\timel{\map p}{1}(x)$ was defined consistently with the kernel $K_\ve$ by using Eq.~\ref{eq:probs_vect}. This definition coincides with the definition of the transition probabilities matrix $P_\ve$ in Eq.~\ref{eq:P_ve} that results in exact extension on $\set X$, i.e. if $x=x_i$ for a certain $i\in[n]$, then  ${\map h}_s^{(1)}(x) = \timel{\map h}{1}_s(x_i)$.

The results are shown in Fig.~\ref{fig:oose_anomaly}. Figure~\ref{fig:sw_anomaly_measure} shows a side view of the dataset $\bar{\set X}$. Each data point $x\in\bar{\set X}$ is colored proportionally to its out-of-sample extension distortion rate $\bmu(x)$ (see step~\ref{stp:dist_rate} in Algorithm~\ref{alg: dm_oose}.) Classification of $\bar{\set X}$ to either normal or abnormal classes is shown in Fig.~\ref{fig:sw_normal_oos}. The normal class $\set N\subset$ is darkly colored and the abnormal class $\bar{\set N}$ is brightly colored. The classification was done according to Definition~\ref{def: normality2}. Thus, $x\in\bar{\set X}$ is classified as normal if its distortion rate satisfies $\bmu(x)\leq\mu$. Otherwise, it is classified as  abnormal. Lastly, a two dimensional view of the out-of-sample extension of the normal class, namely ${\map h}_s^{(1)}(x),~x\in{\set N}$, is shown in Fig.~\ref{fig:sw_concentrated_nndata}. Each embedded data point is colored in the same color as its nearest neighbor from the embedding of $\set X$ by ${\map h}_s^{(1)}({\set X})$. The shown coordinates system is consistent with the one presented in Fig.~\ref{fig:icpqr_swiss_roll}.
\begin{figure}[H]
\centering
\subfigure[Distortion rate function \break $\bmu:\bar{\set X}\to\Rn{}$.]{
\includegraphics[scale=.24]{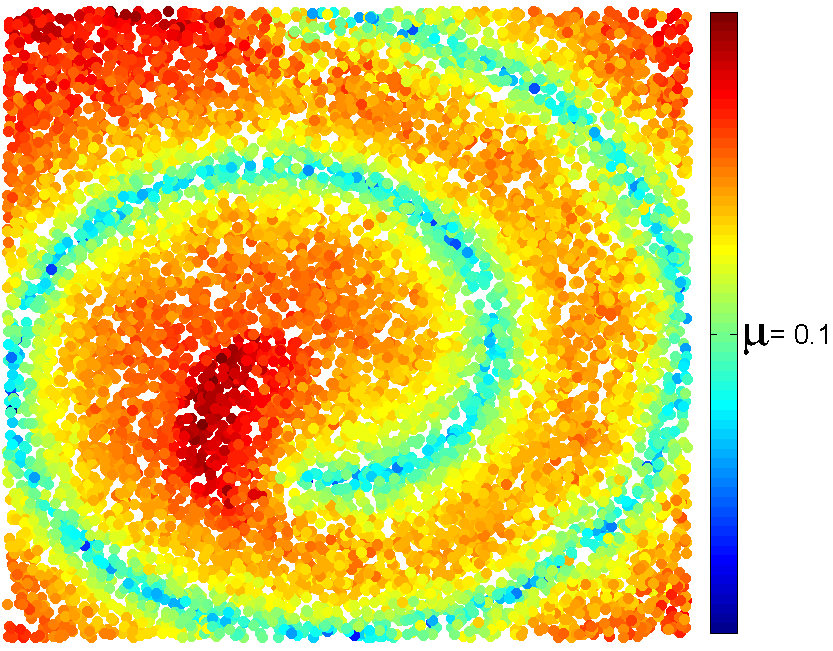}
\label{fig:sw_anomaly_measure}}
\subfigure[Classification of $\bar{\set X}$ to \break normal ($\bmu(x)\leq 0.1$) and \break abnormal (otherwise) classes.]{
\includegraphics[scale=.24]{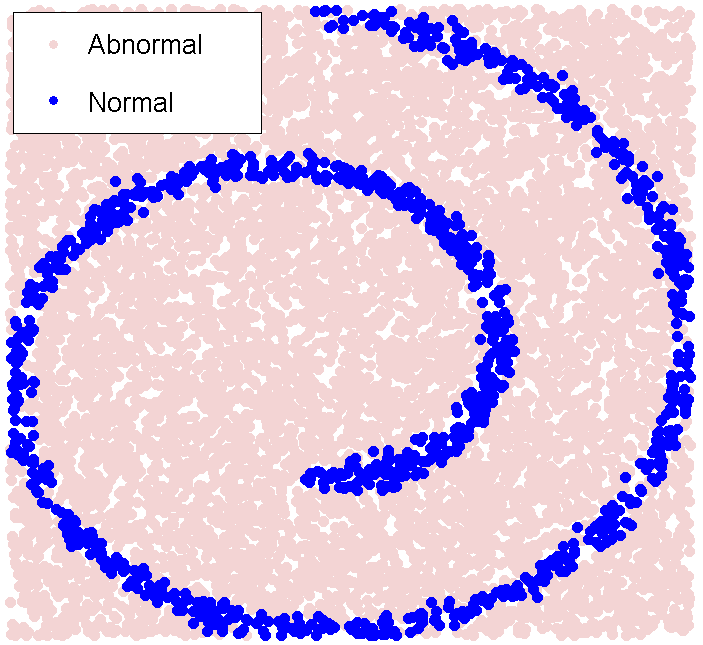}
\label{fig:sw_normal_oos}} \subfigure[Normal data points in the embedded space. Points are colored identically to their nearest neighbor in the original space (see Figure~\ref{fig:swiss_roll}) .]{ \includegraphics[scale=.20]{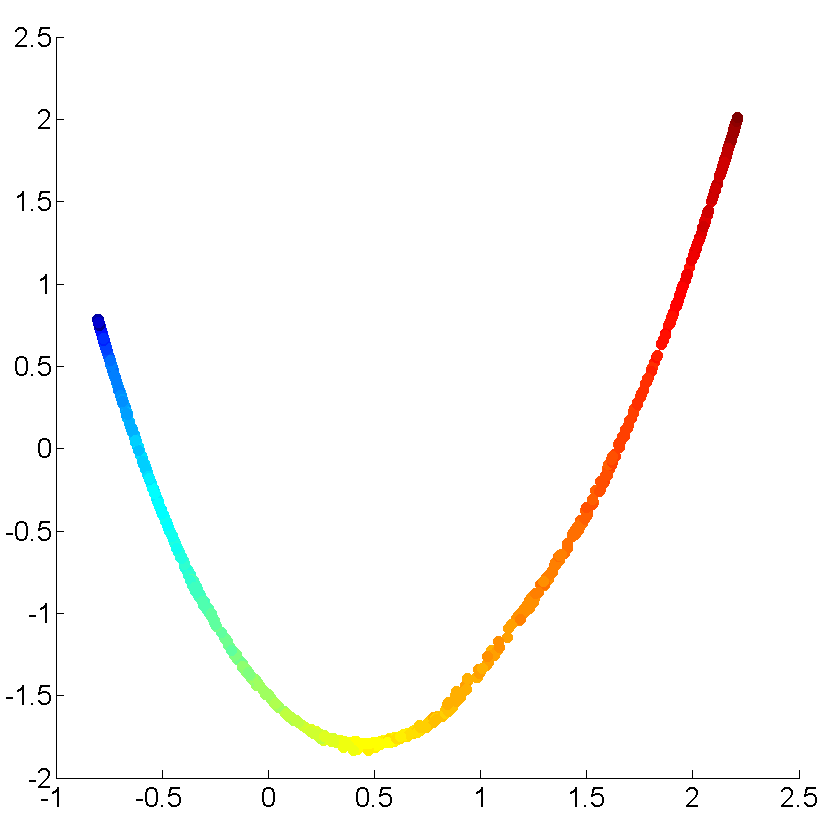}
\label{fig:sw_concentrated_nndata}}
\caption{$10,000$ out-of-sample data points were randomly selected in the bounding box of the original Swiss roll. Left and center: two-dimensional side view of the out-of-sample dataset colored by their distortion rate and classification, respectively. Right: extension of the first two meaningful ICPQR-based DM embedded space coordinates to the normal out-of-sample data points.} \label{fig:oose_anomaly}
\end{figure}

\subsubsection{Comparison with $\mu$IDM}
\label{subsec:cmp_idm}

The $\mu$IDM algorithm in~\cite{Salhov:uIDM2014} is a dictionary-based method that provides a low rank $2\mu$-distortion for the first transition time step $\timel{\map\Psi}{1}$ of DM. The algorithm incrementally constructs an approximated map by using a single scan of the data. This algorithm is greedy and sensitive to the scan order. Typically, the growth rate of the dictionary is very high at the beginning and decays as time advances. Moreover, the resulted dictionary and the resulted embedding's dimension may be redundant. In each iteration of the $\mu$IDM, a newly processed data point is considered for inclusion in the dictionary that was constructed from previously scanned data points.
At the beginning of every iteration, the dictionary elements are already embedded in a low-dimensional space (whose dimension equals to the size of the dictionary), where its geometry is identical to the diffusion geometry of the data restricted to the dictionary. Then, a Nystr\"{o}m-type extension~\cite{baker:nystrum} is applied to the scanned data points, based on its affinities with the dictionary elements, in order to approximate the embedding of the newly processed data point. The exact DM of this data point together with the dictionary is then efficiently computed. The geometries of these two embeddings are identical for the dictionary, therefore, at this stage these two geometries are aligned to coincide on the dictionary. The distance of the extended map from the exact map of the examined data point is measured. If this distance is larger than $\mu$ then the examined data point is added to the dictionary.  The entire computational complexity of this iterative process is lower than the computational complexity of DM. The exact number of required operations depends on the required accuracy and on the dimensionality of the original ambient space.

The presently proposed QR-based DM method considers the entire dataset in each iteration and it is not sensitive to the order of the dataset. Therefore, the resulted dictionary is more sparse as demonstrated in Table~\ref{tbl:compare_icpqr_idm} and in Fig.~\ref{fig:dict_sparse}.

\begin{table}[H]
\begin{center}
    \begin{tabular}{ |c|l|c|c| }
    \hline
        $\mu$    &  & ICPQR-based DM   & $\mu$IDM~\cite{Salhov:uIDM2014} \\
    \hline
        
    &  Dictionary size $s$            & $1,246$     & $2,305$      \\
$0.1$    &  Execution time          & $43$ sec.      & $7$ hours  \\
    &  Actual distortion           & $0.01$     & $0.03$   \\
    \hline
   
    &  Dictionary size $s$              & $752$      & $1,293$     \\
$1$     &  Execution time          & $27$ sec.       & $71$ minutes  \\
    &  Actual distortion           & $0.23$     & $0.61$   \\
    \hline
      
    &  Dictionary size  $s$             & $382$      & $630$       \\
$5$    &  Execution time          & $17$ sec.      & $15$ minutes  \\
    &  Actual distortion           & $3.91$     & $4.46$   \\
    \hline
       
    &  Dictionary size   $s$            & $190$      & $284$       \\
$10$    &  Execution time          & $9$ sec.       & $4$ minutes  \\
    &  Actual distortion           & $12.81$    & $13.24$  \\
    \hline
    \end{tabular}
\caption{Comparison between ICPQR-based DM and $\mu$IDM algorithms related to dictionary size, execution time and actual distortion\protect\footnotemark, w.r.t. $\timel{\map\Psi}{1}$, for various distortion parameters. Clearly, the actual distortion is bounded by $2\mu$. Execution times are averaged over $10$ runs of the algorithms.} \label{tbl:compare_icpqr_idm}
\end{center}
\end{table}
\footnotetext{An actual distortion of a function $f$ w.r.t. $g$ is $\sup_{x,y\in\set X}\absinline{\norminline{f(x)-f(y)}-\norminline{g(x)-g(y)}}$.}

\begin{figure}[H]
\centering \subfigure[ICPQR-based DM Dictionary, $382$ data points]{
\includegraphics[scale=.32]{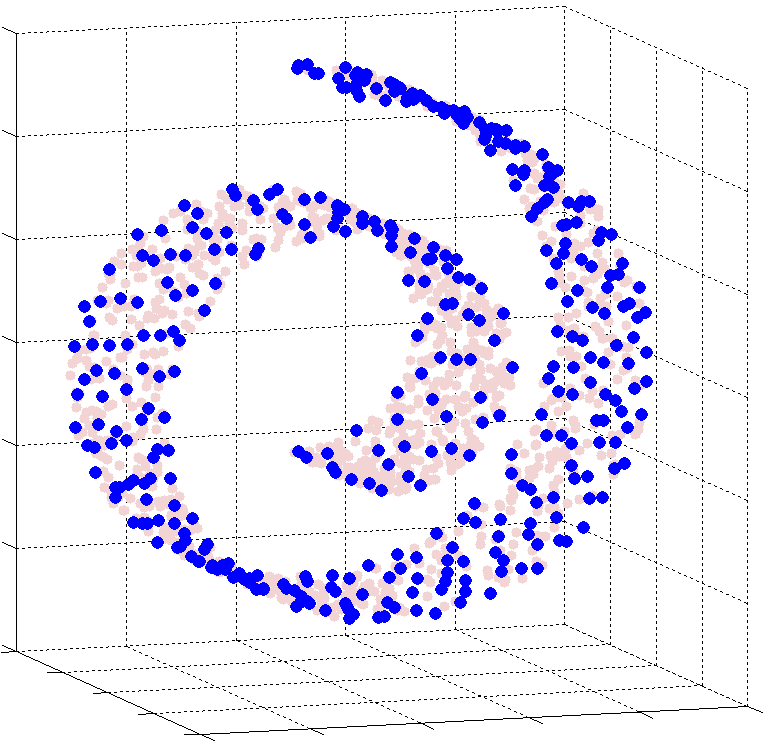}
\label{fig:dict_sparse_icpqr}} \subfigure[$\mu$IDM Dictionary, $630$ data points]{
\includegraphics[scale=.32]{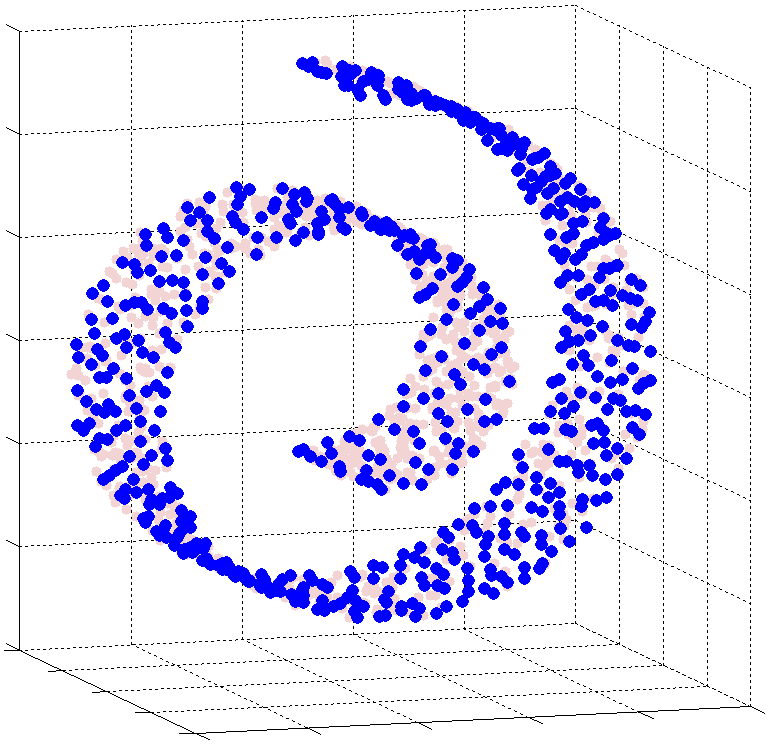}
\label{fig:dict_sparse_idm}} \caption{Data points admitted to ICPQR-based DM and $\mu$IDM dictionaries (dark points) with distortion parameter $\mu=5$. The input was given to both algorithms ordered from the inner part of the Swiss roll to the outer with a larger radius.} \label{fig:dict_sparse}
\end{figure}

\subsection{QR-based DM analysis - real-world data}
\label{subsec:exp_darpa}

This section exemplifies a semi-supervised anomaly detection process applied to a real-world dataset. The examined dataset $\bar{\set X}\subset\Rn{14}$ is the DARPA dataset~\cite{graf1998results} consists of $\bar n = 12,617$ data points. Each data point is a vector of $14$ features that describes a computer network traffic labeled as either normal and legitimate or abnormal that pertains to be an attack and intrusion on the network. The dataset is divided into a training set $\set X\subset \bar{\set X}$, which contains $n = 6,195$ normally behaving samples, and $5$ testing datasets $\{\set T_{mon},\ldots,\set T_{fri}\}$, which were collected during different days of the week. Each day contains normal and abnormal data points as described in Table~\ref{tbl:darpa_testsets}.

First, the training dataset was scaled to place it in the $14$-dimensional unit box $[0,1]^{14}$. Then, the same scaling was applied to the testing datasets $\set T=\set T_{mon}\cup\ldots\cup\set T_{fri}$. Application of Algorithm~\ref{alg: qrdm} to the training set $\set X$, using the Gaussian kernel from Eq.~\ref{eq:Gkernel} with $\ve=0.6$, $t=1$ and $\mu=10^{-7}$, produced an $s$-dimensional $2\mu$ distortion of the first time step $\timel{\map\Psi}{1}$ of DM  of $\set X$, $\timel{\map h}{1}_s:\set X\to\Rn{s}$ with $s=138$. The neighborhood size parameter $\varepsilon$ was chosen to be twice the median of all the mutual distances between the data points in $\Rn{14}$. Such a selection is a common heuristic for determining this  parameter in DM context. This concludes the training phase.

As a second stage,  for each testing data point $x\in\set T$, the probabilities vector $\timel{\vect p}{1}(x)$ was defined as a natural extension of the Gaussian kernel from Eq.~\ref{eq:Gkernel} by using Eq.~\ref{eq:probs_vect} with $\ve = 0.6$. Any testing data point, which is distant (relatively to $\ve$) from the training set $\set X$, yields $k_\ve(x,y)\approx 0,~y\in\set X$, and was a-priori classified as abnormal. The subset of distant testing data points is denoted by $\set F$.

Finally, Algorithm~\ref{alg: dm_oose} was applied to the elements of $\set T\backslash\set F$, using the previously computed probabilities vectors $\timel{\vect p}{1}(x),~x\in\set T$, to get an extension of $\timel{\map h}{1}_s:\set T\backslash\set F\to\Rn{s}$ and a distortion rate $\bmu:\set T\backslash\set F\to\Rn{}$. Then, abnormal data points were detected by following Definition~\ref{def: normality2} with $\mu_{strict}$ from Eq.~\ref{eq:dm_mu_strict}.

The anomaly detection results are summarized in Table~\ref{tbl:darpa_testsets}. Some of them are demonstrated in Fig.~\ref{fig:darpa_anomalies}. The rates in the accuracy percentage and in the false alarms columns are related to the original labeling of $\bar{\set X}$.
\begin{table}[ht]
    \centering
    \begin{tabular}{|c|c|c|c|c|}
        \hline Set & Size & \# of anomalies & Accuracy [\%] & False Alarms [\%] \\
        \hline $\set T_{mon}$ & $1,321$ & $1$ & $100$     & $0.68$\\
        \hline $\set T_{tue}$ & $1,140$ & $53$ &$100$    & $0.53$\\
        \hline $\set T_{wed}$ & $1,321$ & $16$ &$100$  & $0.08$\\
        \hline $\set T_{thu}$ & $1,320$ & $24$ &$96$    & $1.74$\\
        \hline $\set T_{fri}$ & $1,320$ & $18$ &$100$     & $0.15$\\
        \hline
    \end{tabular}
    \caption{Anomaly detection performances.}
\end{table}\label{tbl:darpa_testsets}

Figure~\ref{fig:darpa_anomalies} presents three-dimensional views of an aligned version of $\timel{\map h}{1}_s$ of $\set X$, as well as its extension to $\set T_{thu}\backslash\set F$ with the first three significant coordinates of $\timel{\map \Psi}{1}$, the DM of $\set X$ in the first time step. As can be seen in Fig~\ref{fig:darpa_anomalies}, most of data is located near the training set while the abnormal data points are embedded far away.

\begin{figure}[H]
\centering
\subfigure[]{
\includegraphics[scale=.15]{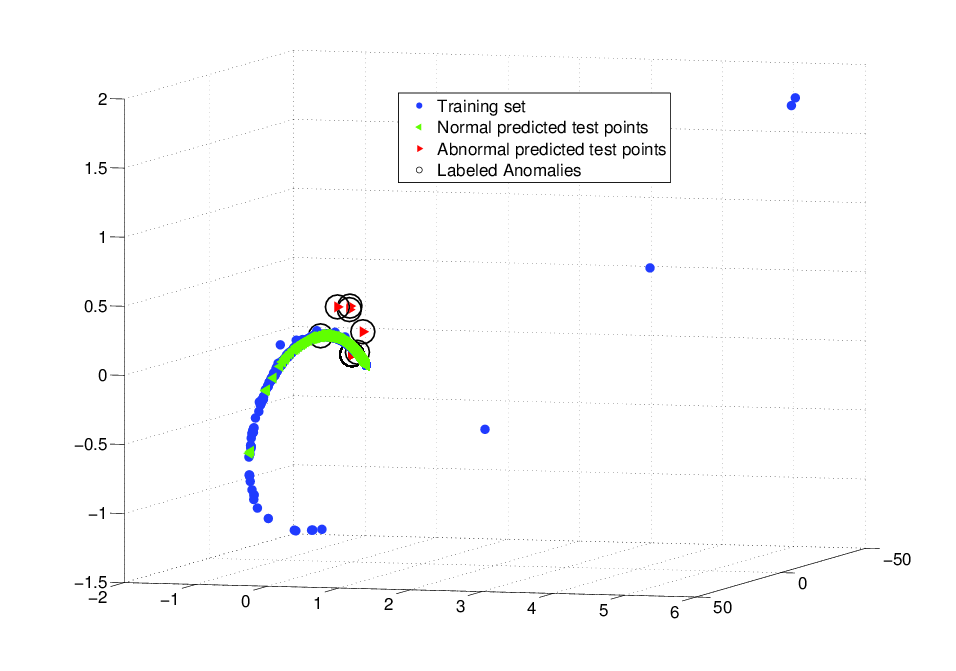}
\label{fig:darpa_general}}
\subfigure[]{
\includegraphics[scale=.15]{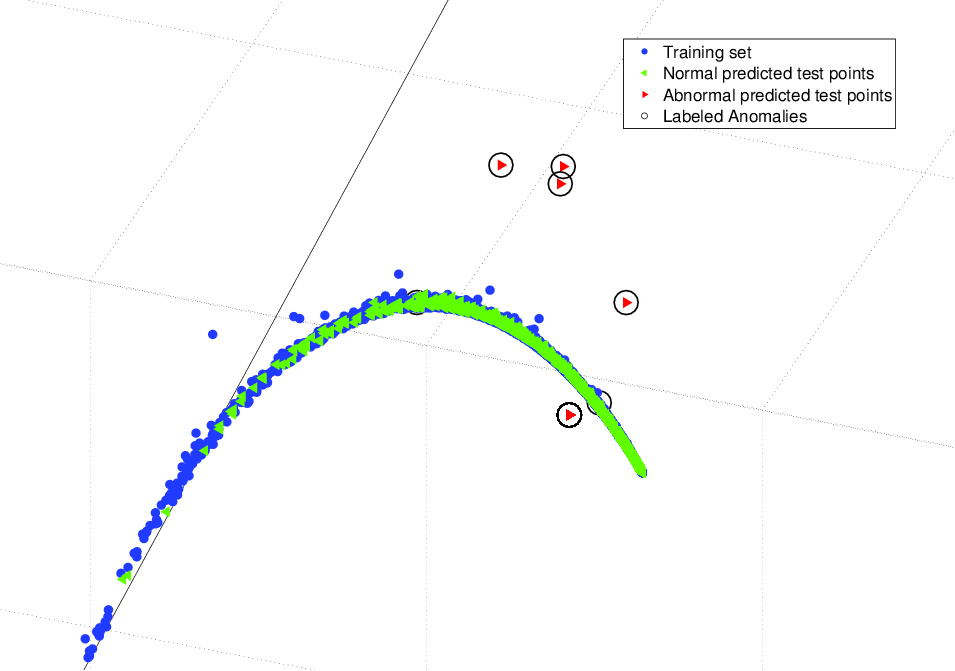}
\label{fig:darpa_normal}}
\subfigure[]{
\includegraphics[scale=.15]{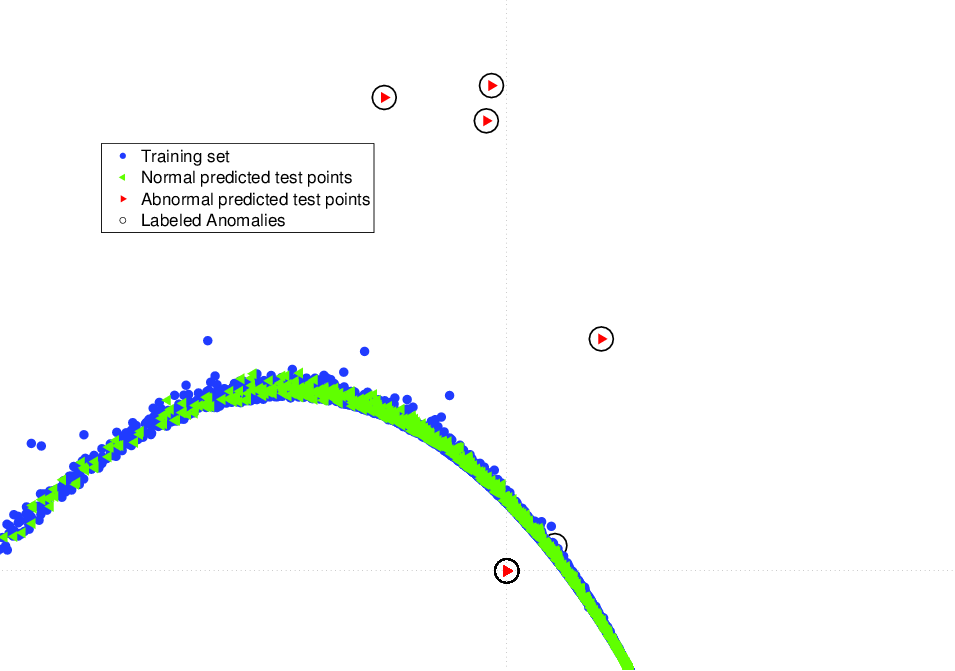}
\label{fig:darpa_anomal}} \caption{Three different angles and scales of an aligned version of a three-dimensional projection of $\timel{\map h}{1}_s$, applied to the training set $\set X$ (blue points), its out-of-sample extension to $\set T_{thu}\backslash\set F$ (strictly-normal data points are green, strictly-abnormal data points are red), and labeled anomalies (black circles).~\subref{fig:darpa_general}~general view.~\subref{fig:darpa_normal} strictly-normal out-of-sample data points are mapped closely to the training set.~\subref{fig:darpa_anomal}~strictly-abnormal data points.}
\label{fig:darpa_anomalies}
\end{figure}

As in Section~\ref{sssec: lde}, the alignment was done only for visualization purposes using the algorithm in Appendix~\ref{appx}.

\subsection{Semi-supervised multi-class classification of high dimensional data}
\label{subsec:isolet}

In this section, a multi classification process, based on Algorithms~\ref{alg: q-less} and~\ref{alg: q-less_oos}, is presented. The analyzed data is the parametric $m$-dimensional ISOLET dataset~\cite{Bache+Lichman:2013} $\bar {\set X}\subset\Rn{m}$ where $m=617$ that contains $7,797$ data points in $[-1,1]^m$. Each data point corresponds to a single human pronunciation of ISOlated LETters. The goal is to classify a testing subset ${\set T}\subset \bar {\set X}$, which contains $1,559$ letter-pronunciation samples spoken by $30$ people, to $26$ classes that are  based on a training set $\set X\subset\bar{\set X}$ that contains $n=6,238$ samples that were already classified to $\set X_\zeta$, $\zeta\in\set I\triangleq\{A,B,\ldots,Z\}$, spoken by $120$ different people.
For this purpose, Algorithm~\ref{alg: q-less} was applied to each training set $\set X_\zeta$ with a distortion parameter $\mu=4.7$, to produce $26$ dictionaries $\set D_\zeta\subset\set X_\zeta$, $\zeta\in\set I$.  Then, for each of these dictionaries, Algorithm~\ref{alg: q-less_oos} was applied to each element in the testing set $\vect x\in\set T$, to produce its distortion rate $\bmu_\zeta(\vect x)$ (see Eq.~\ref{eq:dist_func}). Finally, each of the testing data points was classified to a class whose dictionary described it best, i.e. $\vect x$ was classified to class $\zeta_0$, where $\zeta_0\triangleq\arg\min_{\zeta\in\set I}\bmu_\zeta(\vect x)$. The parameter $\mu$ was determined by taking part of the training set to serve as a validation set. Then, several values of $\mu$ were applied on the (reduced) training set, and the validation set was classified based upon these values. The chosen $\mu$ was the one that was optimal on the validation set.
The results are shown in Fig.~\ref{fig:isolet_icpqr}. Out of $1,559$ test samples, $92\%$ were classified correctly. The classification of the test data is presented in a confusion matrix in Fig.~\ref{fig:isolet_icpqr}. By looking at the shades of the diagonal it can be seen that the majority of the test samples of each class were classified correctly. The sets $\{B,C,D,E,G,P,T,V,Z\}$ and $\{M,N\}$ are the most difficult letters to classify due to high similarity in pronunciation of these letters within each set. The state-of-the-art classification accuracy of this dataset is  $96.73\%$. It  was achieved in~\cite{Dietterich91error-correctingoutput} by using  $30$-bit error correcting output codes that is  based on neural networks. This method is far more complex than the solution proposed in this work.
\begin{figure}[H]
\centering
\includegraphics[scale=.31]{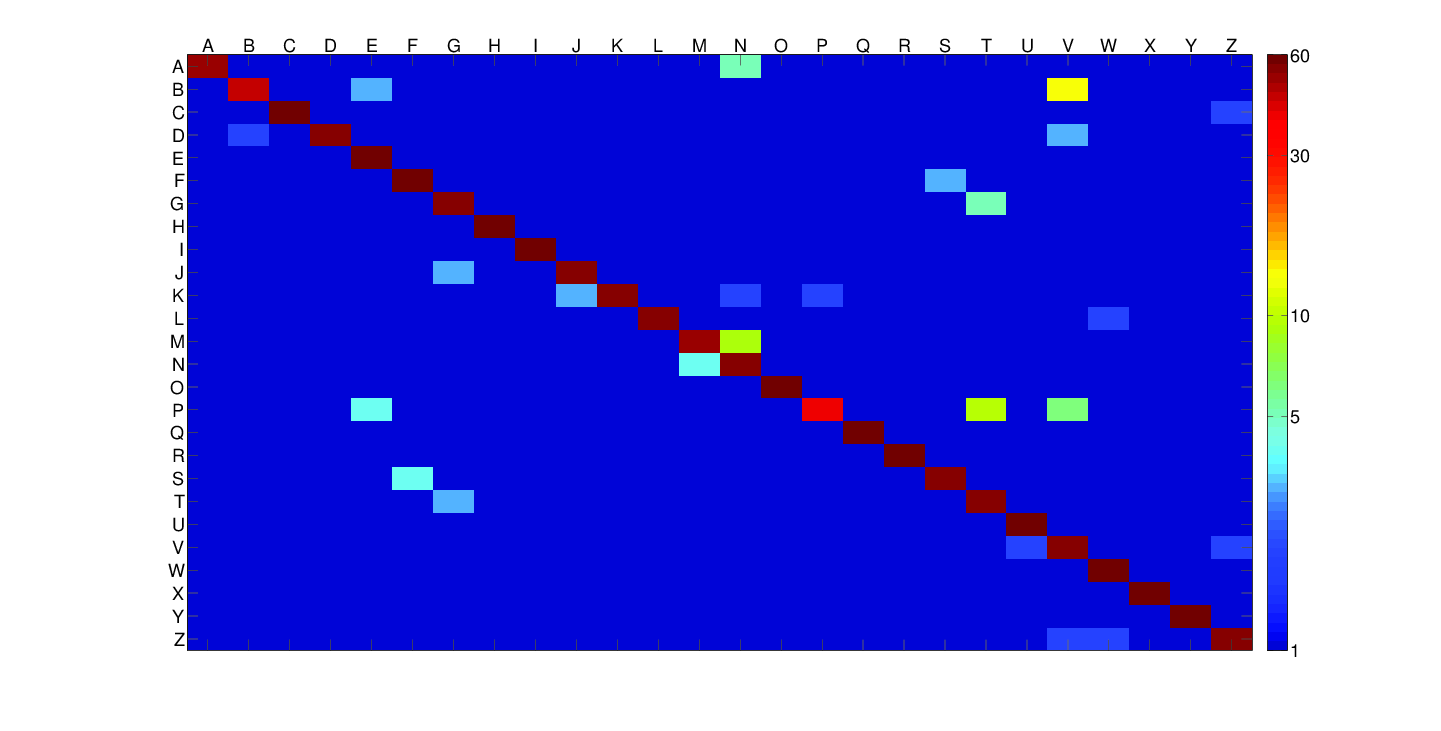}
\caption{Classification of the test samples in the ISOLET dataset. For each ordered couple $(i,j)$, the cell at row $i$ and column $j$ is colored according to the number of test samples belonging to class $i$ that were classified by the algorithm to class $j$. The cells on the diagonal denote correct classification. For each letter, a total of $60$ test samples are provided (except for 'M' which is missing one sample due to recording difficulties).}
\label{fig:isolet_icpqr}
\end{figure}

\section{Conclusions and Future Works}~\label{sec:conclusions}
This work presents a complete framework for linear dimensionality reduction, out-of-sample extension and anomaly detection algorithms for high-dimensional parametric data, which is based on incomplete pivoted QR decomposition of the associated data matrix. The presented method preserves the high-dimensional geometry of the data up to a user-defined distortion parameter. Such a low-dimensional data representation enables further geometrically-based data analysis which, due to the geometry-preservation, is still valid to the original data. The storage complexity of the method is extremely low compared to the classical PCA. In the worst case, its computational complexity is similar to that of the PCA. The method provides a dictionary, which is a subset of landmark data points, that forms a basis for the projection of the entire data to a low dimensional space. Out-of-sample extension and anomalous data point detection become simple tasks once the dictionary is computed. Although the suggested method is designated for parametric data analysis, in some cases it can be adapted to non-parametric data analysis frameworks, as was demonstrated for the DM framework. The stability of our method to perturbations (noise) was proved and its connection to matrix-approximation was presented.

Experimental results show that our method achieves a  lower dimensional embedding than PCA achieves for a certain distortion. Moreover, analysis of both synthetic and real-world datasets achieved good performance for  dimensionality reduction, out-of-sample extension, anomaly detection and multi-class classification.

Future work includes randomized version of the presented framework to provide a more computationally efficient method for a dictionary-based dimensionality reduction method. In addition, a generalization of our method for non-parametric data analysis methods should be considered, as well as a dynamic framework that enables to cope with datasets that vary across time. Finally, a parallel version of the algorithm, which simultaneously builds dictionaries for subsets of the data and then unifies them, is planned.

\appendix
\section{Appendix: Alignment Algorithm}\label{appx}
This section details the alignment algorithm that was utilized in Sections~\ref{sssec: lde} and~\ref{subsec:exp_darpa}, for visualization purposes.

Suppose that $A$ and $B$ are two $m\times n$ matrices of $n$ data points in $\Rn{m}$. If the sizes of $A$ and $B$ are different, then padding by zeros is performed. Clearly, it will not change the geometry of the columns of these matrices. Let $\bar A$ and $\bar B$ be the centralized versions of $A$ and $B$ around the columns means of each one of them. Then, the best orthogonal alignment of $\bar B$'s columns with $\bar A$'s columns is provided by the orthogonal matrix $Q=U_AU_B^\ast$, where $U_A$ and $U_B$ are the left singular vectors of $A$ and $B$, respectively. Then, the aligned centralized matrix $\tilde B = Q*B$ is decentralized by $A$'s columns mean. Obviously, since $Q$ is orthogonal, the geometry of  $\tilde B$'s columns is unaffected. Algorithm~\ref{alg: align} describes the above.

\IncMargin{1em}
\begin{algorithm}
    \DontPrintSemicolon
    \SetAlgoLined
    \SetKwComment{tcp}{//}{}
    \SetKwInOut{Input}{Input}\SetKwInOut{Output}{Output}
    \Input{Two $m\times n$ matrices $A$ and $B$}
    \Output{An $m\times n$ matrix $\tilde B$ whose columns geometry is identical to that of $B$'s columns, and  $\tilde B$'s columns are optimally aligned with  $A$'s columns. }
    \BlankLine
    centralize $A$: $\bar A = A-\vect a\vect 1_n^\ast$, where $\vect a = \sum_{j=1}^n\cols{A}{j}$ and $\vect 1_n\in\Rn{n}$ is the all-ones vector\\
    centralize $B$: $\bar B = B-\vect b\vect 1_n^\ast$, where $\vect b = \sum_{j=1}^n\cols{B}{j}$\\
    compute the left singular vectors $U_A$ and $U_B$ of $\bar A$ and $\bar B$, respectively\\
    define $\tilde B = U_AU_B^\ast\bar B+\vect a\vect 1_n^\ast$

    \caption{Alignment Algorithm}
    \label{alg: align}
\end{algorithm}
\DecMargin{1em}

\section*{Acknowledgments}
\noindent This research was partially supported by the Israeli Ministry of Science \& Technology (Grants No. 3-9096, 3-10898), US-Israel Binational Science Foundation (BSF 2012282) and Blavatnik Computer Science Research Fund.

\bibliographystyle{plain}
\bibliography{qrDR}
\end{document}